\documentclass{article}

\usepackage[preprint]{neurips_2026}
\usepackage[utf8]{inputenc}
\usepackage[T1]{fontenc}
\usepackage{float}
\usepackage{amsmath,amssymb,amsthm,mathtools}
\usepackage{amsfonts}
\usepackage{bm}

\usepackage{booktabs}
\usepackage{tabularx}
\usepackage{enumitem}
\usepackage{algorithm}
\usepackage[noend]{algpseudocode}
\usepackage{graphicx}
\usepackage{subcaption}

\usepackage{nicefrac}
\usepackage{microtype}
\usepackage{xcolor}
\usepackage{url}
\usepackage{etoolbox}
\definecolor{darkgreen}{RGB}{0,100,0}
\usepackage{wrapfig}
\usepackage{hyperref}
\hypersetup{
    colorlinks=true,
    citecolor=blue,
    linkcolor=black,
    urlcolor=darkgreen
}

\usepackage[nameinlink,capitalize,noabbrev]{cleveref}

\newif\ifcomments
\commentsfalse   

\DeclareMathOperator*{\argmax}{arg\,max}

\newtheorem{assumption}{Assumption}[section]
\newtheorem{lemma}[assumption]{Lemma}
\newtheorem{proposition}[assumption]{Proposition}
\newtheorem{theorem}[assumption]{Theorem}
\newtheorem{corollary}[assumption]{Corollary}
\theoremstyle{remark}
\newtheorem{remark}[assumption]{Remark}

\title{Beyond Static Bias: Adaptive Multi-Fidelity Bandits with Improving Proxies}

\author{
Muyun Lu\textsuperscript{1}\thanks{These authors contributed equally to this work.}
\And
Haoyang Hong\textsuperscript{2}\footnotemark[1]
\And
Huazheng Wang\textsuperscript{2}
\And
Ying Lin\textsuperscript{1}\thanks{Corresponding author: ylin53@Central.UH.EDU}
\\
\textsuperscript{1} Department of Industrial and Systems Engineering, University of Houston \\
\textsuperscript{2} School of  Electrical Engineering and Computer Science, Oregon State University
}
\date{}

\begin{document}

\maketitle

\begin{abstract}
As an extension of the classical multi-armed bandit problem, multi-fidelity multi-armed bandits (MF-MAB) enable individual arms to be evaluated using diverse feedback sources that vary in both cost and accuracy. Prior stochastic models typically assume fixed low-to-high fidelity discrepancies, whereas modern proxy sources, such as learning-based simulators and Large Language Models (LLMs), can be improved using additional calibration. We investigate adaptive MF-MAB with improving proxy sources, and focus on the canonical two-fidelity case in which the low-fidelity source becomes more informative with repeated use. To capture this dynamic, we introduce a selected-average mismatch bound that converts dynamic low-fidelity observations into improvement-aware confidence bounds for the high-fidelity target. We propose the Threshold-Based Adaptive Continuation Companion (TACC), an optimistic algorithm that uses a bounded continuation rule to decide when low-fidelity sampling remains cost-effective and when to escalate. We prove an instance-dependent regret bound showing that, for detected intermediate arms, adaptive continuation replaces logarithmic high-fidelity confirmation with bounded low-fidelity continuation. Experiments on synthetic bandits and an LLM-as-a-judge policy-evaluation task examine when continuation improves cost-weighted regret.
\end{abstract}

\section{Introduction}

While multi-armed bandit (MAB) is a standard model for sequential decision-making \citep{lai1985asymptotically, auer2002finite, villar2015clinical, li2010contextual, bouneffouf2019survey}, the feedback one ultimately cares about is often expensive to obtain. Multi-fidelity bandits (MF-MAB) make this trade-off explicit: in
online advertising, for example, one may first test an ad on a shorter time window
or a narrower audience, using this biased but cheaper signal to discard clearly
inferior candidates before paying for full-scale evaluation
\citep{kandasamy2016multi}. This principle uses low fidelities to eliminate bad arms and reserves high-fidelity pulls for
arms that remain close to optimal; it has also been extended to fixed-confidence
best-arm identification and regret minimization with fidelity-dependent costs and
accuracies \citep{wang2023revisited, poiani2022mfbaibandit,poiani2024optimal}.


Traditional MF-MAB frameworks typically assume a static bias between low- and high-fidelity sources. In modern applications, however, a proxy source may improve as additional budget is spent on it. Learning-based simulators can be refined with additional calibration budget, and LLM-based evaluators can be calibrated or fine-tuned with additional feedback \citep{ouyang2022training,bai2022constitutional,yuan2024self}. When such systems serve as low-fidelity sources, their discrepancy from the target evaluator is better viewed as dynamic rather than fixed. The low-fidelity budget then has two effects: it provides immediate observations and can make future low-fidelity information more reliable.



We investigate multi-fidelity multi-armed bandits in which proxy sources may improve through use. As a foundational setting, we focus on a two-fidelity bandit problem with cost-weighted regret, where a stationary high-fidelity source defines the target means and a low-fidelity source whose discrepancy depends on the number of low-fidelity queries. Rather than assuming a fixed bound of bias between fidelities, we summarize the evolving discrepancy through a selected-average mismatch bound. This bound yields target-calibrated, improvement-aware confidence bounds for the high-fidelity target, allowing the low-fidelity proxy to become increasingly informative as it is queried.

The question is not simply whether low fidelity is accurate enough now, but whether it is still worth improving before paying for high fidelity. A static threshold rule escalates as soon as the low-fidelity statistical uncertainty falls below a prescribed level. This is appropriate for arms whose low-fidelity proxy already certifies suboptimality, and also for arms whose proxy remains too misaligned to be useful. The ambiguous case lies in between: after the statistical radius has crossed the threshold, a few additional low-fidelity queries may still reduce the mismatch correction enough to certify the arm without high-fidelity confirmation. Threshold-Based Adaptive Continuation Companion (TACC) is designed to exploit this intermediate regime. It combines optimistic arm selection with a bounded post-threshold continuation rule, which decides whether additional low-fidelity samples remain cost-effective based on the anticipated decrease in the mismatch correction. Our analysis formalizes this intuition through an instance-dependent regret bound, showing that detected intermediate arms can be certified through bounded low-fidelity continuation rather than through logarithmic high-fidelity confirmation.

\paragraph{Contributions.}
Our contributions are as follows.
\begin{enumerate}[leftmargin=1.5em]
    \item In Section~\ref{sec:problem}, we investigate MF-MAB with improving proxy feedback and formulate a canonical two-fidelity model where the low-fidelity source improves with use. For this setting, we introduce a selected-average mismatch bound that yields improvement-aware confidence bounds for the high-fidelity target.

    \item In Section~\ref{sec:method}, we propose the TACC algorithm, pairing optimistic arm selection with a continuation rule to efficiently time escalations to high-fidelity queries.

    \item In Section~\ref{sec:theory}, we derive an instance-dependent regret bound that isolates the role of adaptive continuation: for the detected intermediate arms, the bound replaces a logarithmic high-fidelity confirmation term with a bounded low-fidelity continuation cost.

    \item In Section~\ref{sec:exp}, we empirically examine when the continuation mechanism improves cost-weighted regret in synthetic simulations and a controlled LLM-as-a-judge policy-evaluation experiment.
\end{enumerate}


\section{Related Work}

\paragraph{Multi-fidelity and structured non-stationary bandits.}
Our work builds on stochastic multi-armed bandits and their multi-fidelity extensions \citep{lattimore2020bandit,kandasamy2016multi,poiani2022mfbaibandit,wang2023revisited, poiani2024optimal}. Standard multi-fidelity methods manage the cost--accuracy trade-off by combining statistical uncertainty with a fixed or uniformly bounded fidelity bias; this viewpoint also underlies multi-fidelity optimization and hyperparameter tuning methods such as multi-fidelity Bayesian optimization and Hyperband \citep{kandasamy2019multi,li2018hyperband}. In contrast, our setting allows the low-fidelity discrepancy to evolve with the number of low-fidelity queries. This connects our model to structured non-stationary bandits, including improving/decaying bandits, rotting bandits, and rising bandits \citep{heidari2016tight,seznec2019rotting,metelli2022stochastic}. The main difference is that we keep the high-fidelity target stationary while letting only the low-fidelity proxy improve, so the fidelity choice itself becomes history-dependent.


\paragraph{Bandits with evolving information sources.}
Departing from the traditional assumption of statistically stationary arms, a parallel body of work investigates settings where rewards evolve dynamically \citep{tekin2012online,komiyama2014time,cheung2019learning}. Within this space, structured variants capture pull-dependent trajectories, such as performance degradation in rotting bandits \citep{levine2017rotting,seznec2019rotting}, enhancement in rising bandits \citep{mussi2023best}, and coupled cross-arm dependencies in graph-triggered environments \citep{genalti2024graph}. Closely related are adaptive model-selection strategies that dynamically choose among candidate algorithms or model classes based on bandit feedback \citep{foster2017parameter,agarwal2017corralling,pacchiano2020model,cutkosky2021dynamic,dann2024data}. Such evolving-source dynamics also emerge when arms represent independent learners that improve with increased budget \citep{cella2021best,balef2025put,latypov2025ucb}, as well as in functional bandits characterized by complex feedback mechanisms \citep{tran2014functional,dorn2025functional}. Adopting the assumptions from this literature, our work introduces the evolving-source perspective to the multi-fidelity setting: we target a stationary high-fidelity mean by leveraging an improving low-fidelity proxy, whose approximation error diminishes with repeated use.
\section{Problem Setup}
\label{sec:problem}


We study a budget-constrained stochastic multi-armed bandit problem featuring a cheaper, improving low-fidelity source ($L$) and a target, high-fidelity source ($H$). 

There are $K$ arms indexed by $k\in[K]$. The target value of arm $k$ is its high-fidelity mean $\mu_k^{(H)}$. Let $k^\star\in\argmax_{k\in[K]}\mu_k^{(H)}$ represent the optimal arm, $\mu^\star:=\mu_{k^\star}^{(H)}$ denote its target value, and $\Delta_k:=\mu^\star-\mu_k^{(H)}$ for $k\neq k^\star$ define the suboptimality gap. Querying each fidelity incurs associated cost, denoted by $\lambda^{(L)}$, $\lambda^{(H)}$, with $0<\lambda^{(L)}<\lambda^{(H)}$. Given a total budget $\Lambda>0$, define $T_\Lambda:=\left\lceil \Lambda/\lambda^{(L)}\right\rceil$, which upper bounds the total number of queries under budget $\Lambda$.

At round $t$, the learner selects an arm-fidelity pair $(I_t,m_t)\in[K]\times\{L,H\}$, observes a noisy sample, and pays cost $\lambda^{(m_t)}$. Let $C_t:=\sum_{s=1}^t \lambda^{(m_s)}$ be the cumulative cost after $t$ rounds. We measure performance of an algorithm by the cost-weighted pseudo-regret
\begin{equation}
R(\Lambda)
:=
\sum_{t:C_t\le \Lambda}
\lambda^{(m_t)}
\bigl(\mu^\star-\mu_{I_t}^{(H)}\bigr).
\end{equation}
This formulation follows standard setup in multi-fidelity bandits \citep{kandasamy2016multi}, although \citet{wang2023revisited} study a different regret definition. Formally, querying arm $k$ at high fidelity for the $u$-th time yields $Y_{k,u}^{(H)} = \mu_k^{(H)} + \eta_{k,u}^{(H)}$, whereas the $\tau$-th low-fidelity query yields $Y_{k,\tau}^{(L)} = \mu_k^{(L)}(\tau) + \eta_{k,\tau}^{(L)}$. The noise components, $\eta_{k,u}^{(H)}$ and $\eta_{k,\tau}^{(L)}$, are zero-mean and $1$-sub-Gaussian.

The key modeling choice is that the low-fidelity evolves according to arm-specific query counts. Analogous to rested bandit models \citep{cella2021best}, the low-fidelity response of arm $k$ updates solely upon being queried. We deliberately exclude global-time drift and cross-arm coupling to clearly isolate how localized fidelity improvements influence fidelity choice.

For $n\ge 1$, the selected-average low-fidelity mean is defined as $\bar{\mu}_k^{(L)}(n):=\frac{1}{n}\sum_{\tau=1}^{n}\mu_k^{(L)}(\tau)$. Under the count-indexed model, this is the quantity estimated by the low-fidelity empirical mean. To capture the discrepancy between this evolving low-fidelity source and the high-fidelity target, we use a cumulative mismatch certificate, building upon recent certificate-based analyses for improving information sources \citep{latypov2025ucb}.


\begin{assumption}[Evolving low-fidelity mismatch certificate]
\label{ass:plugin}
For each arm $k$, there exists a known nondecreasing function $U_k(\cdot):\mathbb{N}\to\mathbb{R}_+$ such that, for every $n\ge 1$,
\begin{equation}
\sum_{\tau=1}^{n}
\left|
\mu_k^{(L)}(\tau)-\mu_k^{(H)}
\right|
\le
U_k(n).
\end{equation}
Define the associated selected-average mismatch bound by $B_k(n):=\sup_{s\ge n} U_k(s)/s$ for $n\ge 1$. Then
\begin{equation}
\label{eq:improvingbound}
\left|
\bar{\mu}_k^{(L)}(n)-\mu_k^{(H)}
\right|
\le
\frac{U_k(n)}{n}
\le
B_k(n).
\end{equation}
\end{assumption}




\begin{remark}[Relation to static multi-fidelity models]
\label{rem:evolving_vs_static}
Standard multi-fidelity bandits assume a fixed bias bound, such as $|\mu_k^{(L)}-\mu_k^{(H)}|\le \zeta_k$ \citep{kandasamy2016multi,wang2023revisited}. Assumption~\ref{ass:plugin} replaces this by an evolving mismatch certificate. The induced selected-average mismatch bound $B_k(n)$ controls
\(
|\bar{\mu}^{(L)}_k(n)-\mu^{(H)}_k|,
\)
which is the discrepancy relevant to the low-fidelity empirical mean. Inside the confidence interval, $B_k(n)$ serves as a mismatch correction that relates the selected-average low-fidelity mean to the high-fidelity target.

\end{remark}

Given $B_k(n)$, the effective low-fidelity gap for each suboptimal arm $k\neq k^\star$ can be defined by
\begin{equation}
\Delta_k^{(L)}(n)
:=
\mu^\star-\bar{\mu}_k^{(L)}(n)-B_k(n).
\label{eq:effective-low-gap}
\end{equation}
When $\Delta_k^{(L)}(n)>0$, the low-fidelity upper proxy $\bar{\mu}_k^{(L)}(n)+B_k(n)$ lies below the optimal high-fidelity value $\mu^\star$, so arm $k$ can in principle be ruled out using low-fidelity information. Using the mismatch correction $B_k(n)$, the algorithm can form target-calibrated, improvement-aware upper confidence bounds for the high-fidelity target. Section~\ref{sec:theory} then uses $\Delta_k^{(L)}(n)$ to formalize when low-fidelity continuation certifies suboptimality.

In the next section, we introduce Threshold-Based Adaptive Continuation Companion (TACC), a confidence-based policy that uses this structure to decide whether low-fidelity sampling remains worthwhile before switching to high fidelity.
\section{Algorithm}
\label{sec:method}

Our algorithm follows the standard confidence-based template: maintain upper bounds for the target mean, select an optimistic arm, and decide how that arm should be sampled. The additional issue in our setting is the post-threshold fidelity choice. Once the low-fidelity uncertainty has fallen below the usual switching threshold, a static rule would escalate to high fidelity; an improving low-fidelity source, however, may still become more useful after a few cheap queries. Threshold-Based Adaptive Continuation Companion (TACC) captures this possibility through a bounded continuation rule that uses the anticipated decrease in the mismatch correction to decide whether additional low-fidelity sampling is still worthwhile.


\subsection{Confidence bounds}
\label{subsec:confidence-bounds}
We use both fidelities to build confidence intervals for the same object: the high-fidelity target mean $\mu_k^{(H)}$. For high fidelity, this is a standard statistical interval around the empirical high-fidelity mean. For low fidelity, the empirical mean estimates the selected-average low-fidelity mean, so we enlarge the interval by the mismatch correction $B_k(n)$ before using it as a valid interval for $\mu_k^{(H)}$.

For the budget horizon $\Lambda$, we use the budget-uniform confidence radius
\[G_\Lambda(n,\delta):=\sqrt{\frac{\rho \log(2KT_\Lambda/\delta)}{n}},\qquad n\ge 1,\]
where $\rho>0$ is a problem-dependent constant. This radius controls the sampling error of an empirical mean after $n$ observations, uniformly over arms, fidelities, and all query counts up to the budget horizon \citep{auer2002using, audibert2009exploration}. Let $N_{k,t}^{(L)}$ and $N_{k,t}^{(H)}$ denote the numbers of low- and high-fidelity queries of arm $k$ before round $t$, and let $\hat{\mu}_{k,t}^{(L)}$ and $\hat{\mu}_{k,t}^{(H)}$ denote the corresponding empirical means. The confidence radii are defined as:
\[
\operatorname{rad}_{k}^{(L)}(n,\delta):=G_\Lambda(n,\delta)+B_k(n),
\quad
\operatorname{rad}^{(H)}(n,\delta):=G_\Lambda(n,\delta).
\]


The corresponding confidence bounds from fidelity $m\in\{L,H\}$ for the target
mean $\mu_k^{(H)}$ are\footnote{For high fidelity, $\operatorname{rad}_{k}^{(H)}(n,\delta)$  is identical to $\text{rad}^{(H)}$ here.}
\begin{equation}
\begin{aligned}
\mathrm{UCB}_{k,t}^{(m)}
&=
\hat{\mu}_{k,t}^{(m)}
+
\operatorname{rad}_{k}^{(m)}(N_{k,t}^{(m)},\delta),\\
\mathrm{LCB}_{k,t}^{(m)}
&=
\hat{\mu}_{k,t}^{(m)}
-
\operatorname{rad}_{k}^{(m)}(N_{k,t}^{(m)},\delta).
\end{aligned}
\label{eq:fidelity-confidence-bounds}
\end{equation}
Both fidelities therefore yield valid intervals for the same target quantity $\mu_k^{(H)}$, but with different cost--accuracy trade-offs: low fidelity carries the mismatch correction $B_k(n)$, whereas high fidelity does not. This construction parallels standard multi-fidelity confidence designs, where fidelity-specific intervals are combined to estimate the same target mean, but here the low-fidelity mismatch is count-dependent through $B_k(n)$ rather than a fixed fidelity bias bound \citep{wang2023revisited}.

\subsection{Arm and fidelity selection}
\label{subsec:arm-and-fidelity-selection}

The decision rule has two components: an arm-selection step based on the aggregate confidence score, followed by a fidelity-selection step for the chosen arm.

\paragraph{Arm selection.}
Once the two target intervals are available, we choose the arm that is still most plausible under the high-fidelity objective. Since either fidelity can already rule out an arm as suboptimal for the target mean, we define the aggregate optimistic score as the smaller of the two fidelity-specific upper bounds:
\begin{equation}
\mathrm{UCB}_{k,t}
:=
\min_{m\in\{L,H\}}\mathrm{UCB}_{k,t}^{(m)},
\qquad
I_t\in\argmax_{k\in[K]}\mathrm{UCB}_{k,t}.
\label{eq:arm-score}
\end{equation}
Thus an arm remains competitive only if neither fidelity-specific upper bound has ruled it out.

\paragraph{Fidelity selection with bounded continuation.}
After selecting $I_t$, we decide which fidelity to query next for that arm. If
\(G_\Lambda(N_{I_t,t}^{(L)},\delta)\ge \gamma\),
the low-fidelity estimate is still statistically too coarse, so we continue querying low fidelity.

The more delicate case occurs after this radius falls below $\gamma$. A static threshold rule would now escalate to high fidelity. In contrast, we ask whether a few additional low-fidelity queries could still reduce the mismatch correction enough to be worthwhile. For the selected arm $I_t$, define the continuation gain
\begin{equation}
V_{I_t,t}(s):=B_{I_t}\!\left(N_{I_t,t}^{(L)}\right)-B_{I_t}\!\left(N_{I_t,t}^{(L)}+s\right),
\qquad 1\le s\le S_0.
\label{eq:continuation-gain}
\end{equation}
This quantity is deterministic given the current low-fidelity count and measures how much the mismatch correction would decrease after $s$ further low-fidelity queries.

Let $A_k$ denote the number of continuation pulls already spent on arm $k$ after its low-fidelity radius first drops below $\gamma$. The fidelity-selection rule is
\begin{equation}
m_t=
\begin{cases}
L,
&\text{if }
G_\Lambda\!\left(N_{I_t,t}^{(L)},\delta\right)\ge \gamma
\ \text{or}\
\Big[
A_{I_t}<S_0,\ 
\exists\,1\le s\le S_0-A_{I_t}: V_{I_t,t}(s)\ge 2\eta\gamma
\Big],
\\[2pt]
H,
&\text{otherwise}.
\end{cases}
\label{eq:fidelity-selection}
\end{equation}
Here, $\eta\in(0,1)$ controls how readily we continue low-fidelity sampling after the threshold crossing. We impose \(S_0\lambda^{(L)}\le \lambda^{(H)}\), so a full continuation block costs no more than one high-fidelity query.


\begin{algorithm}[H]
\caption{Threshold-Based Adaptive Continuation Companion (TACC)}
\label{alg:adaptive-mf-mab}
\begin{algorithmic}[1]
\State \textbf{Input:} budget $\Lambda$, arm set $[K]$, costs $\lambda^{(L)},\lambda^{(H)}$, confidence level $\delta$, threshold $\gamma$, continuation parameter $\eta$, continuation cap $S_0$
\State Query each arm once at each fidelity; initialize $\hat{\mu}_{k}^{(L)},\hat{\mu}_{k}^{(H)},N_{k,t}^{(L)},N_{k,t}^{(H)}$ and $A_k\leftarrow 0$
\State Set cumulative cost $C\leftarrow K(\lambda^{(L)}+\lambda^{(H)})$
\While{$C\le \Lambda$}
    \State Construct $\mathrm{UCB}_{k,t}^{(L)},\mathrm{LCB}_{k,t}^{(L)},\mathrm{UCB}_{k,t}^{(H)},\mathrm{LCB}_{k,t}^{(H)}$ for all $k\in[K]$ with \eqref{eq:fidelity-confidence-bounds}
    \State Compute $\mathrm{UCB}_{k,t}$ with \eqref{eq:arm-score} and select $I_t\in\argmax_{k\in[K]}\mathrm{UCB}_{k,t}$
    \If{$G_\Lambda(N_{I_t,t}^{(L)},\delta)\ge \gamma$}
        \State Set $m_t=L$
    \ElsIf{$A_{I_t}<S_0$ and there exists $1\le s\le S_0-A_{I_t}$ such that $V_{I_t,t}(s)\ge 2\eta\gamma$}
        \State Set $m_t=L$ and update $A_{I_t}\leftarrow A_{I_t}+1$
    \Else
        \State Set $m_t=H$
    \EndIf
    \If{$C+\lambda^{(m_t)}>\Lambda$}
        \State \textbf{break}
    \EndIf
    \State Query arm $I_t$ at fidelity $m_t$, observe reward, and update empirical means and query counts
    \State Update cumulative cost $C\leftarrow C+\lambda^{(m_t)}$
\EndWhile
\end{algorithmic}
\end{algorithm}

Algorithm~\ref{alg:adaptive-mf-mab} initializes with one query per arm at each fidelity. In each later round, it builds improvement-aware confidence intervals, selects the arm with the largest optimistic index, and chooses the fidelity. It samples low fidelity while its radius exceeds $\gamma$; after crossing $\gamma$, it continues low fidelity only if the mismatch-decrease test warrants a bounded continuation block, and otherwise switches to high fidelity. A pre-query check enforces the budget $\Lambda$.


\section{Theoretical Result}
\label{sec:theory}

We analyze Algorithm~\ref{alg:adaptive-mf-mab} on the budget-uniform event
$\mathcal{E}_\Lambda$ from Lemma~\ref{lem:ci}, which holds with probability
at least $1-\delta$. Let
\(
\ell_\Lambda:=\rho\log\!\left(\frac{2KT_\Lambda}{\delta}\right),
\,\,
G_\Lambda(n,\delta):=\sqrt{\frac{\ell_\Lambda}{n}},
\,\,
N_\gamma:=\min\{n\ge 1:G_\Lambda(n,\delta)<\gamma\}.
\)
The logarithmic factor is chosen only to support a union bound over arms,
fidelities, and sample counts up to the budget horizon.

\subsection{Main result}
\label{subsec:analysis-main-result}

For each suboptimal arm $k\neq k^\star$, define the low-fidelity certification
time
\begin{equation}
\tau_k(\gamma)
:=
\inf\left\{
n\ge 1:
\Delta_k^{(L)}(n)\ge 2\gamma
\right\},
\label{eq:analysis-cert-time}
\end{equation}
with $\tau_k(\gamma)=\infty$ if the set is empty. Thus $\tau_k(\gamma)$ is
the first low-fidelity sample count at which the deterministic low-fidelity
certificate for arm $k$ reaches level $2\gamma$.

The suboptimal arms are partitioned into
\begin{equation}
\begin{aligned}
\mathcal{A}_\gamma
&:=
\left\{
k\neq k^\star:
\tau_k(\gamma)\le N_\gamma
\right\},\\
\mathcal{B}_\gamma
&:=
\left\{
k\neq k^\star:
\tau_k(\gamma)>N_\gamma+S_0
\right\},\\
\mathcal{C}_\gamma
&:=
\left\{
k\neq k^\star:
N_\gamma<\tau_k(\gamma)\le N_\gamma+S_0
\right\}.
\end{aligned}
\label{eq:analysis-arm-partition}
\end{equation}
Arms in $\mathcal{A}_\gamma$ are low-fidelity certifiable by the nominal
switching threshold. Arms in $\mathcal{B}_\gamma$ are not certifiable within
the allowed continuation window. The intermediate class $\mathcal{C}_\gamma$
contains arms that are not certifiable at the static threshold, but can become
certifiable after at most $S_0$ additional low-fidelity queries.

The refinement of $\mathcal{C}_\gamma$ is execution dependent. Let
$\mathcal{C}_\gamma^{\mathrm d}$ denote the arms in $\mathcal{C}_\gamma$ whose
realized trajectory reaches the certification count $\tau_k(\gamma)$ within the
budget horizon, and for which every selected round satisfying
$N_\gamma\le N_{k,t}^{(L)}<\tau_k(\gamma)$ is queried at low fidelity. Let
$\mathcal{C}_\gamma^{\mathrm u}:=\mathcal{C}_\gamma\setminus
\mathcal{C}_\gamma^{\mathrm d}$ and
$\mathcal{H}_\gamma:=\mathcal{B}_\gamma\cup
\mathcal{C}_\gamma^{\mathrm u}$. A formal pathwise definition of
$\mathcal{C}_\gamma^{\mathrm d}$ is given in
\eqref{eq:app-analysis-Cdet}. Thus $\mathcal{H}_\gamma$ collects the arms for
which the analysis retains the logarithmic high-fidelity confirmation term.

The regret analysis uses one additional regularity condition, which ensures
that a low-fidelity certificate, once reached within the continuation window,
does not disappear before the window ends.

\begin{assumption}[Local Low-fidelity certificate stability]
\label{ass:mono-cert}
Fix the threshold $\gamma>0$ and continuation cap $S_0$ used in Algorithm~\ref{alg:adaptive-mf-mab}. Let $N_\gamma:=\min\{n\ge 1: G_\Lambda(n,\delta)<\gamma\}$ denote the nominal low-fidelity threshold associated with $\gamma$, where $G_\Lambda$ is the budget-uniform statistical radius defined in Section~\ref{sec:method}. For each suboptimal arm $k\neq k^\star$, if $\Delta_k^{(L)}(n)\ge 2\gamma$ for some $n\le N_\gamma+S_0$, then $\Delta_k^{(L)}(m)\ge 2\gamma$ for all $m\in[n,\,N_\gamma+S_0]$.
\end{assumption}

\begin{remark}
Assumption~\ref{ass:mono-cert} is only imposed on the finite window relevant
to the continuation rule. It requires that, once the low-fidelity certificate
reaches level $2\gamma$ before the continuation cap is exhausted, the same
certificate is not lost within that window.\footnote{Appendix~\ref{app:benchmark-rdfe}
gives a complementary phase-based reference scheme that avoids this
continuation-specific stability condition.}
Localized regularity assumptions of this type are common in structured
evolving bandit models \citep{heidari2016tight,metelli2022stochastic,seznec2019rotting}.
\end{remark}

\begin{theorem}[Regret bound for TACC]
\label{thm:analysis-main-regret}
Suppose Assumption~\ref{ass:plugin} and Assumption~\ref{ass:mono-cert} hold,
and suppose the continuation cap satisfies
$S_0\lambda^{(L)}\le \lambda^{(H)}$. Then, with probability at least
$1-\delta$, Algorithm~\ref{alg:adaptive-mf-mab} satisfies
\begin{align}
R(\Lambda)
\le\;&
\sum_{k\in\mathcal{A}_\gamma}
\Delta_k
\left[
\lambda^{(L)}(N_\gamma+1)+\lambda^{(H)}
\right]
+
\sum_{k\in\mathcal{C}_\gamma^{\mathrm{d}}}
\Delta_k
\left[
\lambda^{(L)}(N_\gamma+S_0+1)+\lambda^{(H)}
\right]
\nonumber\\
&+
\sum_{k\in\mathcal{H}_\gamma}
\Delta_k
\left[
\lambda^{(L)}(N_\gamma+S_0+1)
+
\lambda^{(H)}
\left(
1+\left\lceil \frac{4\ell_\Lambda}{\Delta_k^2}\right\rceil
\right)
\right].
\label{eq:analysis-main-regret}
\end{align}
\end{theorem}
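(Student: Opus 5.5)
We work throughout on the budget-uniform event $\mathcal{E}_\Lambda$ of Lemma~\ref{lem:ci}. On this event, for every arm $k$, every round $t$ and every fidelity $m$, we have $\mathrm{LCB}^{(m)}_{k,t}\le \mu_k^{(H)}\le \mathrm{UCB}^{(m)}_{k,t}$. For $m=L$ this uses Assumption~\ref{ass:plugin}: the quantity $|\bar\mu_k^{(L)}(n)-\mu_k^{(H)}|$ is at most $B_k(n)$. Hence $\mathrm{UCB}_{k,t}\ge\mu_k^{(H)}$ for every $k$, and in particular $\mathrm{UCB}_{I_t,t}\ge \mathrm{UCB}_{k^\star,t}\ge\mu^\star$ at every round.

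The regret decomposes as $R(\Lambda)=\sum_{k\ne k^\star}\Delta_k\bigl[\lambda^{(L)}N^{(L)}_{k}+\lambda^{(H)}N^{(H)}_{k}\bigr]$, where $N^{(m)}_k$ are the final counts. So it suffices to bound the final low- and high-fidelity counts of each suboptimal arm according to its class; the initialization contributes the $+1$ and the leading $\lambda^{(H)}$ terms.

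\textbf{Key elimination lemma.} If arm $k$ is selected at round $t$, then $\mathrm{UCB}^{(L)}_{k,t}\ge\mu^\star$. Writing $n=N^{(L)}_{k,t}$, on $\mathcal{E}_\Lambda$ we have $\hat\mu^{(L)}_{k,t}\le \bar\mu^{(L)}_k(n)+G_\Lambda(n,\delta)$. This gives $\mu^\star\le \bar\mu_k^{(L)}(n)+B_k(n)+2G_\Lambda(n,\delta)$, that is, $\Delta_k^{(L)}(n)\le 2G_\Lambda(n,\delta)$. Likewise, the high-fidelity bound gives $\Delta_k\le 2G_\Lambda(N^{(H)}_{k,t},\delta)$, so $N^{(H)}_{k,t}\le 4\ell_\Lambda/\Delta_k^2$. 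This yields the $\lceil 4\ell_\Lambda/\Delta_k^2\rceil$ term for arms in $\mathcal{H}_\gamma$, where we only need the trivial low-fidelity count bound. That count bound follows because low fidelity is pulled only while $G_\Lambda\ge\gamma$ (at most $N_\gamma$ pulls past initialization) or during continuation (at most $S_0$ pulls, since $A_k$ is capped). This gives $N^{(L)}_k\le N_\gamma+S_0+1$.

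\textbf{Class $\mathcal{A}_\gamma$.} Once $n\ge N_\gamma$, we have $G_\Lambda(n,\delta)<\gamma$. Since $\tau_k(\gamma)\le N_\gamma$, Assumption~\ref{ass:mono-cert} applied at $n=\tau_k(\gamma)$ gives $\Delta^{(L)}_k(N_\gamma)\ge 2\gamma>2G_\Lambda(N_\gamma,\delta)$. This contradicts selection, so $k$ is never selected once its low count reaches $N_\gamma$. Therefore $N^{(L)}_k\le N_\gamma+1$ and $N^{(H)}_k\le1$ (initialization only). The delicate point is that the count must reach exactly $N_\gamma$ and stop there. Before that, $G_\Lambda\ge\gamma$ forces low-fidelity pulls, so no high-fidelity pulls occur beyond initialization.

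\textbf{Class $\mathcal{C}^{\mathrm d}_\gamma$.} By definition of the detected set, every selected round with $N_\gamma\le N^{(L)}_{k,t}<\tau_k(\gamma)$ queries low fidelity. Hence no high-fidelity pull occurs before the low count reaches $\tau_k(\gamma)\le N_\gamma+S_0$. At that count, $\Delta_k^{(L)}\ge2\gamma>2G_\Lambda$, and by Assumption~\ref{ass:mono-cert} this persists until count $N_\gamma+S_0$. So $k$ can no longer be selected while its count stays in this window. Since the count cannot grow without selection, $k$ is never selected again. This gives $N^{(L)}_k\le N_\gamma+S_0+1$ and $N^{(H)}_k\le 1$. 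For $\mathcal{B}_\gamma\cup\mathcal{C}^{\mathrm u}_\gamma$ we use the generic bounds above. Summing over classes with $\mathcal{E}_\Lambda$ of probability $\ge1-\delta$ gives \eqref{eq:analysis-main-regret}.

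\textbf{Main obstacle.} I expect the hardest step to be the careful counting in the $\mathcal{C}^{\mathrm d}_\gamma$ and $\mathcal{A}_\gamma$ cases. This means precisely reconciling the off-by-one between the count $N^{(L)}_{k,t}$ before a round and the certification index $\tau_k(\gamma)$. It also means checking that mono-certification is invoked only inside $[n,N_\gamma+S_0]$, which holds because the count never exceeds $N_\gamma+S_0+1$. The condition $S_0\lambda^{(L)}\le\lambda^{(H)}$ is not strictly needed for the bound itself beyond interpretation.
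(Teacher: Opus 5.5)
Your proposal is correct and follows essentially the same route as the paper. It works on $\mathcal{E}_\Lambda$, derives the low- and high-fidelity selection conditions, and caps the low-fidelity count at $N_\gamma+S_0+1$ via the threshold rule and the counter $A_k$. It then counts pulls class by class, using Assumption~\ref{ass:mono-cert} to carry the certificate from $\tau_k(\gamma)$ to the count at which the arm stops being selected. Your remark that $S_0\lambda^{(L)}\le\lambda^{(H)}$ plays no role in the bound itself also matches the paper's proof, which uses that condition only in the static-versus-adaptive comparison.
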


\paragraph{Proof sketch.}

On the event $\mathcal{E}_\Lambda$, the confidence bounds from both fidelities are uniformly valid for $\mu_k^{(H)}$. Then any selected suboptimal arm must remain plausible under the smaller upper confidence bound, which gives low- and high-fidelity selection certificates. The regret bound follows by counting pulls for each arm class. Arms in $\mathcal{A}_\gamma$ are certified before the nominal threshold, arms in $\mathcal{C}_\gamma^{\mathrm d}$ are certified through bounded low-fidelity continuation, and arms in $\mathcal{H}_\gamma$ require standard high-fidelity confirmation. Multiplying the resulting counts by $\lambda^{(m)}\Delta_k$ and summing over arms gives \eqref{eq:analysis-main-regret}.

Theorem~\ref{thm:analysis-main-regret} gives the finite-budget regret decomposition. The following Corollary~\ref{cor:analysis-main-regret-order} records its leading-order form by substituting $N_\gamma=\lceil \ell_\Lambda/\gamma^2\rceil$, using $\ell_\Lambda=O(\log T_\Lambda)$, and absorbing one-pull and bounded-continuation terms. The derivation is deferred to Appendix~\ref{app:proof-regret-Corollary}.
\begin{corollary}[Order-wise regret bound]
\label{cor:analysis-main-regret-order}
Under the conditions of Theorem~\ref{thm:analysis-main-regret}, for fixed $K$, $\rho$, and $\delta$, with probability at least $1-\delta$,
\begin{equation}
R(\Lambda)
=
O\!\left(
\lambda^{(L)}
\sum_{k\neq k^\star}
\frac{\Delta_k}{\gamma^2}\log T_\Lambda
+
\lambda^{(H)}
\sum_{k\in \mathcal{H}_\gamma}
\frac{\log T_\Lambda}{\Delta_k}
\right).
\label{eq:analysis-main-regret-order}
\end{equation}
\end{corollary}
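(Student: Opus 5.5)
The corollary follows from Theorem~\ref{thm:analysis-main-regret} by bounding each bracket term by term on the same event $\mathcal{E}_\Lambda$, which holds with probability at least $1-\delta$. No new probabilistic argument is needed.

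\textbf{Step 1: size of $N_\gamma$.} From $G_\Lambda(n,\delta)=\sqrt{\ell_\Lambda/n}<\gamma$ iff $n>\ell_\Lambda/\gamma^2$, we get
\[
N_\gamma\le \ell_\Lambda/\gamma^2+1 .
\]
Since $\ell_\Lambda=\rho\log(2KT_\Lambda/\delta)=\rho\log T_\Lambda+\rho\log(2K/\delta)$, we have $\ell_\Lambda=O(\log T_\Lambda)$ for fixed $K,\rho,\delta$ and $T_\Lambda\ge 2$. For the constant absorption below I take $\gamma$ to be a fixed parameter of the algorithm, so $\ell_\Lambda/\gamma^2$ dominates additive $O(1)$ terms.

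\textbf{Step 2: low-fidelity terms.} Every suboptimal arm appears in exactly one of $\mathcal{A}_\gamma$, $\mathcal{C}_\gamma^{\mathrm d}$, $\mathcal{H}_\gamma$. Its low-fidelity contribution is therefore at most
\[
\Delta_k\lambda^{(L)}(N_\gamma+S_0+1)\le \Delta_k\lambda^{(L)}\bigl(\ell_\Lambda/\gamma^2+S_0+2\bigr).
\]
Using $S_0\lambda^{(L)}\le\lambda^{(H)}$, the $S_0$ and constant parts contribute $O(\Delta_k(\lambda^{(L)}+\lambda^{(H)}))$. This gives the first sum, $\lambda^{(L)}\sum_{k\neq k^\star}\Delta_k\log T_\Lambda/\gamma^2$, plus a per-arm constant.

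\textbf{Step 3: high-fidelity terms.} The single $\lambda^{(H)}\Delta_k$ term carried by every arm is a one-pull term independent of $\Lambda$, and is absorbed into the $O(\cdot)$. For $k\in\mathcal{H}_\gamma$, the confirmation term satisfies
\[
\Delta_k\lambda^{(H)}\lceil 4\ell_\Lambda/\Delta_k^2\rceil\le \lambda^{(H)}\bigl(4\ell_\Lambda/\Delta_k+\Delta_k\bigr)=O\bigl(\lambda^{(H)}\log T_\Lambda/\Delta_k\bigr)+O(\lambda^{(H)}\Delta_k).
\]

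\textbf{Main obstacle: absorbing the constants.} The delicate point is justifying that the additive $\Lambda$-independent constants $\sum_k\Delta_k(\lambda^{(L)}+\lambda^{(H)})$ vanish inside the $O(\cdot)$. I would handle this by noting that for fixed $K$ and $T_\Lambda\ge 3$, so that $\log T_\Lambda\ge 1$, these constants are lower order in $\Lambda$. This is the usual convention, under which lower-order terms are absorbed for fixed instance parameters. Where it matters, the one-pull $\lambda^{(H)}\Delta_k$ term can also be dominated by $\lambda^{(L)}\Delta_k\ell_\Lambda/\gamma^2$ for $\Lambda$ large enough. Summing Steps 2 and 3 then yields \eqref{eq:analysis-main-regret-order}.
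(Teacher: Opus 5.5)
Your proposal is correct and follows essentially the same route as the paper: you bound $N_\gamma\le 1+\ell_\Lambda/\gamma^2$, bound each arm's low-fidelity contribution by $\lambda^{(L)}\Delta_k(N_\gamma+S_0+1)$, split the $\mathcal{H}_\gamma$ confirmation term via $\lceil x\rceil\le x+1$ into $O(\lambda^{(H)}\ell_\Lambda/\Delta_k)+O(\lambda^{(H)}\Delta_k)$, and substitute $\ell_\Lambda=O(\log T_\Lambda)$. Two of your steps are more explicit than the paper's ``absorbing one-pull and bounded-continuation terms'' but rely on the same convention: you use $S_0\lambda^{(L)}\le\lambda^{(H)}$ to fold the continuation block into a one-pull cost, and you note that the additive $\lambda^{(H)}\Delta_k$ terms are absorbed only when costs, $\gamma$ and $S_0$ are held fixed and $\Lambda$ is large.
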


The low-fidelity term in \eqref{eq:analysis-main-regret-order} is paid by all suboptimal arms. The logarithmic high-fidelity term is paid only by arms in $\mathcal{H}_\gamma$. Thus, the bound separates intermediate arms certified through low-fidelity continuation from those that still require high-fidelity confirmation.

\begin{remark}[Comparison with existing multi-fidelity rules]
\label{rem:analysis-mf-comparison}
An MF-UCB style static threshold rule \citep{kandasamy2016multi} escalates once $G_\Lambda(N_{k,t}^{(L)},\delta)<\gamma$. Thus, arms in $\mathcal{C}_\gamma$ require high-fidelity confirmation under the static rule, whereas TACC replaces this logarithmic confirmation with a bounded low-fidelity continuation cost for the detected subclass $\mathcal{C}_\gamma^{\mathrm d}$. The remaining arms are included in $\mathcal{H}_\gamma=\mathcal{B}_\gamma\cup\mathcal{C}_\gamma^{\mathrm u}$.

The comparison with \citet{wang2023revisited} is only at the level of logarithmic gap dependence. Their regret is measured in the highest-fidelity reward scale, whereas ours is cost weighted: a query of arm $k$ at fidelity $m$ incurs $\lambda^{(m)}\Delta_k$. Under the unit-cost normalization $\lambda^{(L)}=\lambda^{(H)}=1$, $T_\Lambda$ is order-equivalent to $\Lambda$, and Corollary~\ref{cor:analysis-main-regret-order} becomes
\[
R(\Lambda)
=
O\!\left(
\sum_{k\neq k^\star}
\frac{\Delta_k}{\gamma^2}\log \Lambda
+
\sum_{k\in \mathcal{H}_\gamma}
\frac{\log \Lambda}{\Delta_k}
\right).
\]
Thus, the high-fidelity confirmation term retains the usual logarithmic gap-dependent form, but is paid only by arms in $\mathcal{H}_\gamma$. This is a structural comparison of regret bounds, not a dominance statement under identical regret definitions. The formal static-threshold comparison is given in Proposition~\ref{prop:analysis-regret-improvement}.
\end{remark}
\section{Experiments}
\label{sec:exp}

We evaluate TACC in two settings. Synthetic experiments instantiate the model directly: low-fidelity feedback is cheap, initially mismatched with the high-fidelity target, and improves with repeated low-fidelity queries. The controlled LLM-as-a-judge experiment tests the same fidelity-allocation mechanism in a policy-evaluation pipeline, where each arm is a prompt-policy, low fidelity is weak-judge feedback, and high fidelity is verifier correctness. In both settings, we ask whether bounded post-threshold continuation can reduce cost-weighted regret before the learner escalates to high fidelity. The code is available at
\url{https://anonymous.4open.science/r/tacc-anonymous-release-CF04/}.

\subsection{Synthetic Bandit Experiments}
\label{subsec:syn}


\paragraph{Synthetic model and setting.}

We consider two settings. \textbf{Bandit Set A} contains $200$ arms with high-fidelity means sampled from $\mathcal{U}(0.1,0.9)$. The low-fidelity trajectory is generated with an arm-wise bias sign $s_k\in\{-1,1\}$ and an induced selected-average mismatch bound satisfying $B_k(n)=O(n^{-r})$. We set the initial mismatch scale to $\zeta_A=0.2$, costs to $\lambda^{(L)}=1$ and $\lambda^{(H)}=10$, and algorithm parameters to $\gamma=0.063$ and $S_0=10$. \textbf{Bandit Set B} contains $500$ arms with high-fidelity means sampled from $\mathcal{N}(0,1)$. It uses a larger mismatch scale $\zeta_B=1$ and a higher high-fidelity cost $\lambda^{(H)}=50$, with $\lambda^{(L)}=1$, $\gamma=0.141$, and $S_0=50$. This setting tests a harder regime with more arms, larger mismatch, and greater penalty for premature high-fidelity escalation. In both settings, we set the continuation parameter to $\eta=10^{-4}$. The main experiments use $r=0.5$; Appendix~\ref{app:synthetic-sensitivity} reports sensitivity to $r$.

\paragraph{Baselines and results.}
We compare TACC with \textsc{UCB}, which uses only high-fidelity feedback \citep{auer2002finite}; \textsc{MF-UCB}, a static multi-fidelity switching rule \citep{kandasamy2016multi}; and an LUCB-style static two-fidelity rule \citep{wang2023revisited}. We report cost-weighted cumulative pseudo-regret averaged over $10$ independent trials; Appendix~\ref{app:synthetic-sensitivity} gives the decay-rate sensitivity.

Figure~\ref{fig:simulation_plots} shows that TACC has lower average regret in both synthetic settings. The advantage is larger in Bandit Set B, where high-fidelity pulls are more costly. This is the regime where continuation has the clearest effect: avoiding even a modest number of unnecessary high-fidelity confirmations visibly reduces cost-weighted regret.

\begin{figure}[t]
    \centering
    \begin{subfigure}{0.49\columnwidth}
        \centering
        \includegraphics[width=\linewidth]{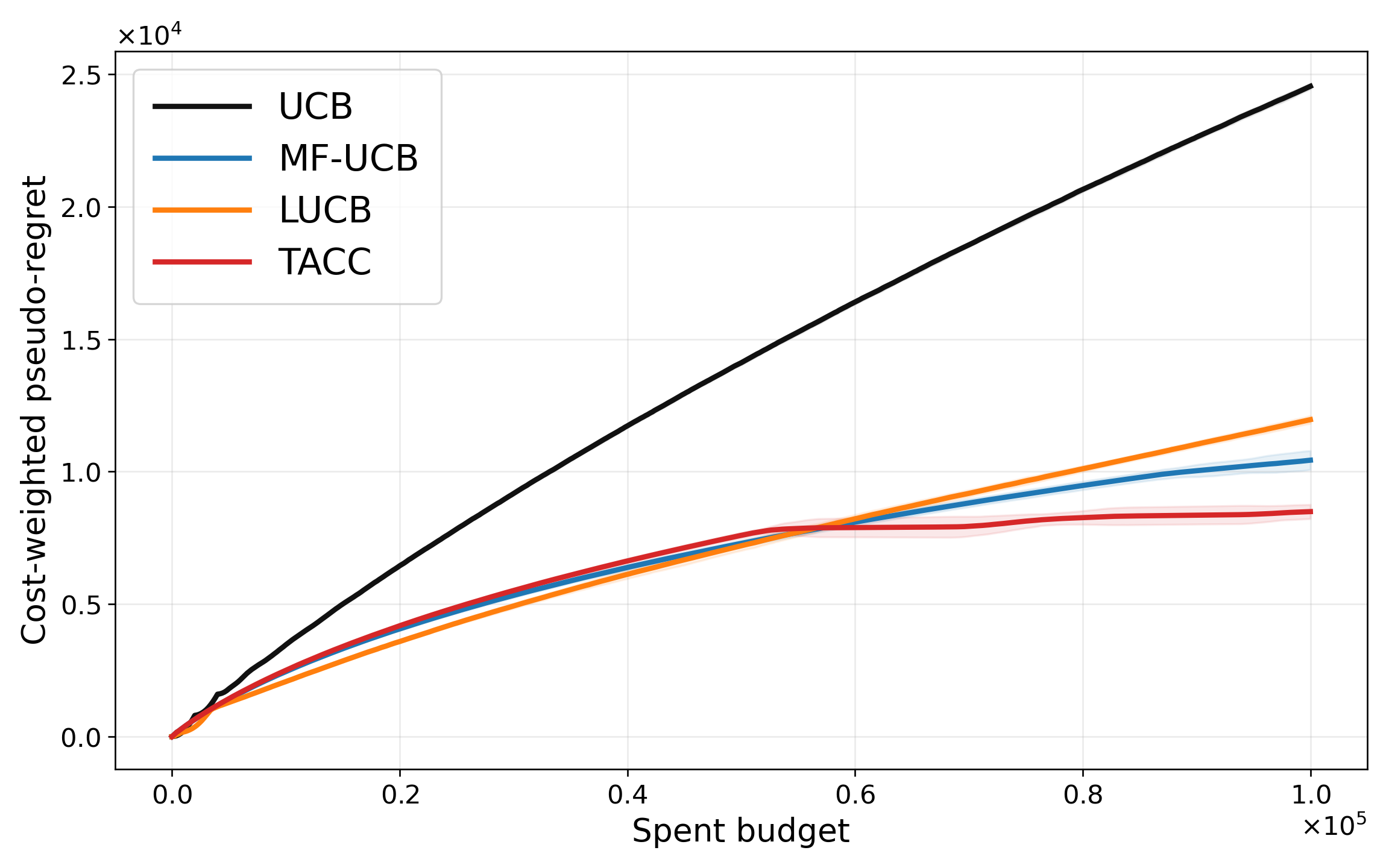}
        \caption{Bandit Set A, costs $(1,10)$}
        \label{fig:A_r05}
    \end{subfigure}
    \hfill
    \begin{subfigure}{0.49\columnwidth}
        \centering
        \includegraphics[width=\linewidth]{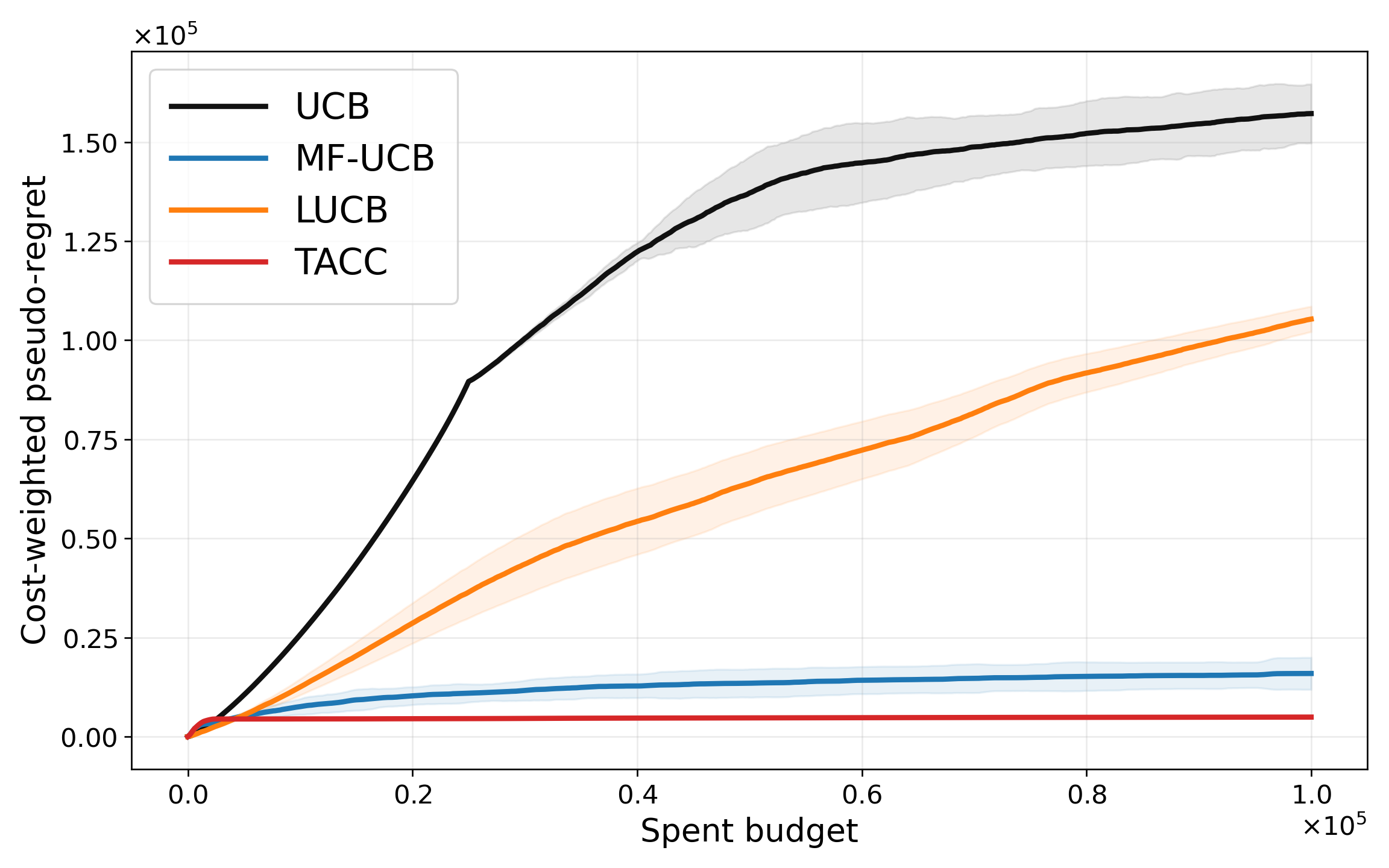}
        \caption{Bandit Set B, costs $(1,50)$}
        \label{fig:B_r05}
    \end{subfigure}
    \caption{Synthetic simulation results with $r=0.5$. TACC uses bounded low-fidelity continuation to reduce unnecessary high-fidelity queries.}
    \label{fig:simulation_plots}
\end{figure}

\subsection{LLM-as-a-Judge Experiment}
\label{subsec:llm}

We next test the same fidelity-allocation mechanism in a controlled LLM-as-a-judge evaluation task. Each arm is a prompt-policy for Natural Language Inference (NLI). A high-fidelity query checks whether the policy prediction matches the reference NLI label, while a low-fidelity query uses feedback from a weaker judge for the same policy. Candidate answers were generated using \texttt{gemini-3-flash-preview}. Low-fidelity judge signals were produced with \texttt{gemini-3.1-flash-lite-preview}; when an auxiliary stronger judge signal was used for diagnostics, we used \texttt{gemini-3-flash-preview}. The high-fidelity reward remains verifier correctness against
the reference NLI label.

\paragraph{Task and arms.}
Each item consists of a premise, a hypothesis, and a reference label in $\{\textsc{entailment},\textsc{neutral},\textsc{contradiction}\}$. A prompt-policy maps the premise--hypothesis pair to a predicted NLI label, and regret is measured against the best policy under verifier correctness. We use a fixed four-arm benchmark so that differences in regret mainly reflect fidelity allocation rather than large-arm exploration.

To instantiate the known mismatch bound setting in Assumption~\ref{ass:plugin}, we use a controlled weak-judge proxy. For arm $k$, after its $n$-th low-fidelity query,
\begin{equation}
    \mu_k^{(L)}(n)
    =
    \mu_k^{(H)}
    +
    b_k
    +
    a_k(n+n_0)^{-r}.
    \label{eq:llm-residual-bias-model}
\end{equation}
Here $n=N_{k,t}^{(L)}$ is the arm-specific low-fidelity count. The decaying term models the part of the weak-judge mismatch that improves with calibration, while $b_k$ allows a residual policy-level gap from the verifier. All mismatch correction-based methods are given the same valid bound induced by this proxy, but none observes the latent trajectory $\mu_k^{(L)}(n)$. Thus \eqref{eq:llm-residual-bias-model} is a stress test of the known selected-average mismatch-bound theory, not a claim that real LLM judges follow this exact law. The residual term is motivated by evidence that LLM judges can retain systematic mismatch even after improvement \citep{zheng2023judging,huang2025empiricaljudge}. Appendix~\ref{app:llm-pure-decay} removes $b_k$, and Appendix~\ref{app:llm-empirical-gap} measures the low--high gap from trained weak-judge checkpoints.

\paragraph{Baselines and protocol.}

For the LLM-as-a-judge experiments, we use two baselines (UCB and MF-UCB) introduced in the synthetic study (Section \ref{subsec:syn}) and add \textsc{DNC}, short for dynamic no-continuation, as the continuation ablation. Because this experiment is evaluated by cumulative regret rather than fixed-confidence best-arm identification, we do not use the LUCB-style elimination baseline as a primary LLM comparison. \textsc{DNC} uses the same aggregate optimistic score, count-dependent mismatch bound $B_k(n)$, and threshold $\gamma$ as TACC, but switches directly to high fidelity once the low-fidelity statistical radius falls below $\gamma$, without applying the mismatch-decrease continuation test. The comparison between TACC and \textsc{DNC} therefore isolates the effect of bounded continuation. The main figure reports the four primary methods, \textsc{TACC}, \textsc{DNC}, \textsc{MF-UCB}, and \textsc{UCB}; the diagnostic baselines \textsc{Weak-Fixed} and \textsc{Low-Improving} are reported in Appendix~\ref{app:llm-additional}. All methods use the same budget sequence and cost-weighted pseudo-regret metric, and paired comparisons use common random seeds.

\begin{wrapfigure}{r}{0.50\linewidth}
    \vspace{-1.0em}
    \centering
    \includegraphics[width=\linewidth]{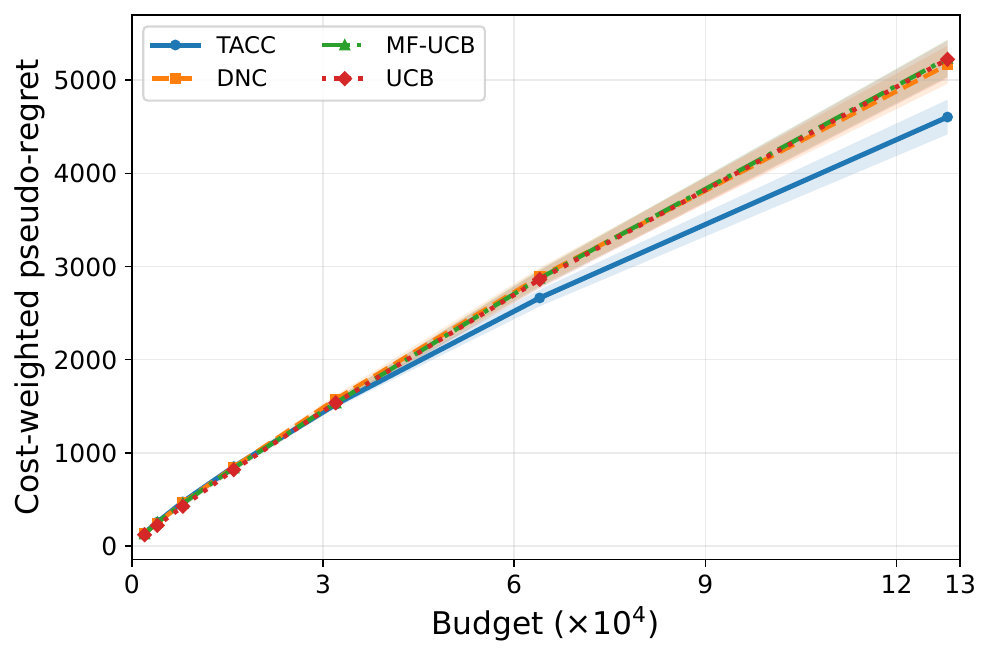}
    \vspace{-1.2em}
    \caption{LLM-as-a-judge result under residual low--high mismatch with $\lambda^{(H)}=500$. Curves show mean cost-weighted pseudo-regret over $200$ seeds.}
    \label{fig:llm_main}
    \vspace{-1.0em}
\end{wrapfigure}

\paragraph{Main result.}
The main residual-mismatch LLM-as-a-judge experiment uses $\lambda^{(L)}=1$, $\lambda^{(H)}=500$, $r=0.75$, $\zeta=0.4$, $b_k \equiv 0.05$, $\gamma=0.025$, $\eta=10^{-4}$, and $S_0=128$. We evaluate budgets up to $\Lambda=128000$ over $200$ independent seeds. Figure~\ref{fig:llm_main} shows the mean regret curves, and Table~\ref{tab:llm-main-summary} reports final-budget regret. TACC has the lowest final mean regret among the reported methods. The comparison with \textsc{DNC} is the direct ablation: both methods use the same improving-mismatch bound information, but only TACC spends a bounded number of post-threshold low-fidelity queries before escalating. The comparisons with \textsc{MF-UCB} and \textsc{UCB} show the benefit relative to static two-fidelity switching and high-fidelity-only evaluation.

\begin{table}[h]
\centering
\caption{
Final-budget regret in the residual-mismatch LLM-as-a-judge experiment with $\lambda^{(H)}=500$ and $\Lambda=128000$.
Values are mean cost-weighted pseudo-regret $\pm$ standard error over $200$ seeds.
Lower is better.
}
\label{tab:llm-main-summary}
\small
\setlength{\tabcolsep}{5pt}
\begin{tabular}{c c c c c}
\toprule
$\lambda^{(H)}$ & TACC & DNC & MF-UCB & UCB \\
\midrule
$500$
& $\mathbf{4023.4 \pm 247.3}$
& $5201.0 \pm 289.7$
& $5359.1 \pm 281.8$
& $5083.2 \pm 286.8$ \\
\bottomrule
\end{tabular}
\end{table}

\section{Conclusion}
This work investigates multi-fidelity bandits with improving proxy feedback through a canonical two-fidelity model. We use a selected-average mismatch bound to construct confidence intervals for the high-fidelity target, enabling TACC to replace high-fidelity queries with bounded low-fidelity continuation. The analysis characterizes when this substitution can reduce regret, and the experiments support this mechanism in synthetic simulations and finite-budget LLM-as-a-judge diagnostics.

\paragraph{Limitations.}
This work focuses on the canonical two-fidelity setting; extending TACC to multiple evolving proxies, under hierarchical or non-hierarchical fidelity structures, remains future work. In addition, it assumes a known selected-average mismatch bound, leaving data-driven estimation of this bound as an important direction. Finally, our experiments consider arm-local low-fidelity evolution, where the low-fidelity response improves only when the corresponding arm is queried. This does not capture global fine-tuning settings, where updating the low-fidelity model may affect all arms simultaneously. Extending TACC to such shared-evolution dynamics remains future work.


\newpage

\bibliographystyle{plainnat}
\bibliography{refs}

\clearpage

\appendix

\section{Notations}
\label{sec:appendix_notations}

\medskip
\begingroup
\setlength{\tabcolsep}{0pt}
\renewcommand{\arraystretch}{1.10}
\small
\begin{center}
\begin{tabularx}{\linewidth}{@{}l@{\hspace{0.9em}\vrule\hspace{0.9em}}X@{}}
\toprule
\(K,\,[K]\) & Number of arms; arm index set. \\
\(L,\,H\) & Low fidelity and high fidelity. \\
\(k^\star,\,\mu^\star\) & Optimal arm and its high-fidelity mean. \\
\(\mu_k^{(H)}\) & High-fidelity mean of arm \(k\). \\
\(\Delta_k\) & High-fidelity suboptimality gap of arm \(k\). \\
\(\lambda^{(L)},\,\lambda^{(H)}\) & Query costs of low and high fidelity. \\
\(\Lambda,\,T_\Lambda\) & Total budget; query upper bound induced by the budget. \\
\(C_t\) & Cumulative cost up to round \(t\). \\
\(R(\Lambda)\) & Cost-weighted pseudo-regret under budget \(\Lambda\). \\
\((I_t,m_t)\) & Arm and fidelity selected at round \(t\). \\
\(Y_{k,u}^{(H)},\,Y_{k,\tau}^{(L)}\) & High- and low-fidelity observations. \\
\(\mu_k^{(L)}(\tau)\) & Low-fidelity mean at the \(\tau\)-th low-fidelity query of arm \(k\). \\
\(\bar{\mu}_k^{(L)}(n)\) & Selected-average low-fidelity mean after \(n\) low-fidelity queries. \\
$U_k(n)$ & Cumulative mismatch certificate.\\
\(B_k(n)\) & Selected-average mismatch bound. \\
\(\Delta_k^{(L)}(n)\) & Effective low-fidelity gap. \\
\(\ell_\Lambda,\,G_\Lambda(n,\delta)\) & Budget-uniform log factor and confidence radius. \\
\(\operatorname{rad}_k^{(L)}(n,\delta),\,\operatorname{rad}^{(H)}(n,\delta)\) & Low- and high-fidelity confidence radii. \\
\(N_{k,t}^{(L)},\,N_{k,t}^{(H)}\) & Low- and high-fidelity query counts before round \(t\). \\
\(\hat{\mu}_{k,t}^{(L)},\,\hat{\mu}_{k,t}^{(H)}\) & Empirical means before round \(t\). \\
\(\mathrm{UCB}_{k,t}^{(m)},\,\mathrm{LCB}_{k,t}^{(m)}\) & Fidelity-\(m\) confidence bounds for arm \(k\). \\
\(\mathrm{UCB}_{k,t}\) & Aggregated optimistic score used for arm selection. \\
\(\gamma\) & Low-fidelity uncertainty threshold. \\
\(S_0\) & Continuation cap after crossing the threshold. \\
$A_k$ & Continuation pulls already spent on arm $k$.\\
$V_{k,t}(s)$ & Mismatch-bound decrease from $s$ additional low-fidelity queries.\\
\(N_\gamma\) & Nominal low-fidelity sampling threshold. \\
\(\tau_k(\gamma)\) & Low-fidelity certification time at resolution \(\gamma\). \\
\(A_\gamma,\,B_\gamma,\,C_\gamma\) & Easy, hard, and intermediate arm classes. \\
\(C_\gamma^{\mathrm{d}},\,C_\gamma^{\mathrm{u}}\) & Detected and undetected intermediate subclasses. \\
\(\mathcal{E}_\Lambda\) & Budget-uniform confidence event. \\
\bottomrule
\end{tabularx}
\end{center}
\renewcommand{\arraystretch}{1}
\endgroup
\section{Additional Proof Details and Algorithmic Variants}
\label{app:proofs}

This appendix contains the proofs of Lemma~\ref{lem:ci},
Propositions~\ref{prop:analysis-low-selection},
\ref{prop:analysis-high-selection}, and
\ref{prop:analysis-low-query-bound}, Theorem~\ref{thm:analysis-main-regret},
Corollary~\ref{cor:analysis-main-regret-order}, and
Proposition~\ref{prop:analysis-regret-improvement}.

\begin{lemma}[Valid confidence intervals]
\label{lem:ci}
Suppose Assumption~\ref{ass:plugin} holds. Then there exists an event
$\mathcal{E}_\Lambda$ with probability at least $1-\delta$ such that, on
$\mathcal{E}_\Lambda$, for all arms $k\in[K]$ and all rounds
$t\le T_\Lambda$,
\begin{equation}
\mu_k^{(H)}
\in
\bigl[
\mathrm{LCB}_{k,t}^{(L)},
\mathrm{UCB}_{k,t}^{(L)}
\bigr]
\quad\text{and}\quad
\mu_k^{(H)}
\in
\bigl[
\mathrm{LCB}_{k,t}^{(H)},
\mathrm{UCB}_{k,t}^{(H)}
\bigr].
\end{equation}
\end{lemma}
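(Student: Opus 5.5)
The plan is to build $\mathcal{E}_\Lambda$ from deterministic "stacks" of observations, so that the adaptive sampling of the algorithm plays no role. For each arm $k$ and fidelity $m\in\{L,H\}$, we view the observations as fixed sequences indexed by the per-arm query count: $(\eta^{(H)}_{k,u})_{u\ge1}$ and $(\eta^{(L)}_{k,\tau})_{\tau\ge1}$. Because of the count-indexed (rested) model, the empirical mean after $n$ queries at fidelity $m$ equals $\frac1n\sum_{i\le n}Y^{(m)}_{k,i}$ no matter which rounds those queries happened in. This is the standard reward-table argument. Define
\[
\mathcal{E}_\Lambda:=\Bigl\{\forall k\in[K],\ m\in\{L,H\},\ 1\le n\le T_\Lambda:\ \Bigl|\tfrac1n\textstyle\sum_{i=1}^n \eta^{(m)}_{k,i}\Bigr|\le G_\Lambda(n,\delta)\Bigr\}.
\]

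The first step is concentration for a fixed triple $(k,m,n)$. The average of $n$ independent zero-mean 1-sub-Gaussian variables is $(1/n)$-sub-Gaussian. Hoeffding's inequality therefore gives failure probability at most $2\exp(-nG_\Lambda^2/2)=2\exp(-\ell_\Lambda/2)$. The union bound runs over $K$ arms, $2$ fidelities and $T_\Lambda$ counts, so the total failure probability is at most $4KT_\Lambda\exp(-\rho\log(2KT_\Lambda/\delta)/2)$. We need $\rho\ge2$ to make this at most $2\delta$; a slightly larger $\rho$, or splitting $\delta$ across the two fidelities, brings it to $\le\delta$. So the main bookkeeping step is choosing the problem-dependent constant $\rho$, say $\rho\ge 2+2\log 2/\log(2KT_\Lambda/\delta)$, or simply $\rho=4$ when $2KT_\Lambda/\delta\ge4$. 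This is precisely the reason $\rho$ was introduced. We also note that every count reached before exhausting the budget is at most $T_\Lambda$, because each query costs at least $\lambda^{(L)}$.

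The second step works on $\mathcal{E}_\Lambda$ at any round $t\le T_\Lambda$, writing $n=N^{(m)}_{k,t}\ge1$ (the initialization guarantees $n\ge 1$). For high fidelity, $\hat\mu^{(H)}_{k,t}-\mu^{(H)}_k$ is exactly the noise average, so $|\hat\mu^{(H)}_{k,t}-\mu_k^{(H)}|\le G_\Lambda(n,\delta)=\operatorname{rad}^{(H)}$. For low fidelity, we decompose
\[
\hat\mu^{(L)}_{k,t}-\mu^{(H)}_k=\Bigl(\hat\mu^{(L)}_{k,t}-\bar\mu^{(L)}_k(n)\Bigr)+\Bigl(\bar\mu^{(L)}_k(n)-\mu^{(H)}_k\Bigr).
\]
The first term is the noise average, which is bounded by $G_\Lambda(n,\delta)$. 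The second term is bounded by $B_k(n)$ via \eqref{eq:improvingbound} in Assumption~\ref{ass:plugin}. The triangle inequality then gives $|\hat\mu^{(L)}_{k,t}-\mu^{(H)}_k|\le\operatorname{rad}^{(L)}_k(n,\delta)$, which is exactly the containment in both intervals.

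The only genuine subtlety is justifying that the adaptively chosen counts $N^{(m)}_{k,t}$ do not break concentration. The stack construction handles this, since the event is uniform over all $n\le T_\Lambda$. The remaining point to check is that the low-fidelity means $\mu^{(L)}_k(\tau)$ are deterministic functions of $\tau$, so $\hat\mu^{(L)}_{k,t}-\bar\mu^{(L)}_k(n)$ is exactly a noise average. This holds in the count-indexed model.
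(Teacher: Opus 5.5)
Your proposal is correct and follows the paper's proof. It uses the same uniform-over-counts sub-Gaussian concentration for each arm and fidelity, and the same decomposition of $\hat{\mu}_{k,t}^{(L)}-\mu_k^{(H)}$ into a noise average around $\bar{\mu}_k^{(L)}(N_{k,t}^{(L)})$ plus a bias term bounded by $B_k(N_{k,t}^{(L)})$ via Assumption~\ref{ass:plugin}, closed by a union bound (the paper allots $\delta/2$ to each fidelity). Your reward-table justification for adaptively chosen counts and your explicit condition on $\rho$ just make precise what the paper leaves implicit in ``by a uniform concentration bound'' and in calling $\rho$ a problem-dependent constant.
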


Throughout the appendix, we work on the event $\mathcal{E}_\Lambda$ from
Lemma~\ref{lem:ci}. Recall that
\[
\ell_\Lambda := \rho \log\left(\frac{2KT_\Lambda}{\delta}\right),
\qquad
G_\Lambda(n,\delta) := \sqrt{\frac{\ell_\Lambda}{n}},
\qquad
N_\gamma := \min\{n\ge 1:G_\Lambda(n,\delta)<\gamma\}.
\]
The constants inside the logarithm are not essential; they are chosen only to ensure a union bound over arms, fidelities, and all sample counts up to the budget horizon.

\paragraph{Proof of Lemma~\ref{lem:ci}.}
\label{app:proof-lem-ci}
For high fidelity, the claim is standard, for each fixed arm $k$ and sample size $n$, the empirical mean of the first $n$ high-fidelity samples is $1$-sub-Gaussian around $\mu_k^{(H)}$. By a uniform concentration bound over all $k\in[K]$ and all $1\le n\le T_\Lambda$, with probability at least $1-\delta/2$,
\begin{equation}
\left|
\hat{\mu}_{k,t}^{(H)}-\mu_k^{(H)}
\right|
\le
G_\Lambda\bigl(N_{k,t}^{(H)},\delta\bigr)
\end{equation}
for all arms $k$ and all rounds $t\le T_\Lambda$.

For low fidelity, recall the selected-average low-fidelity mean
\[
\bar{\mu}_k^{(L)}(n)
:=
\frac{1}{n}\sum_{\tau=1}^{n}\mu_k^{(L)}(\tau).
\]
The low-fidelity empirical mean concentrates around $\bar{\mu}_k^{(L)}(n)$, not around the
instantaneous mean $\mu_k^{(L)}(n)$. Using
\begin{equation}
\hat{\mu}_{k,t}^{(L)}-\mu_k^{(H)}
=
\bigl(
\hat{\mu}_{k,t}^{(L)}-\bar{\mu}_k^{(L)}(N_{k,t}^{(L)})
\bigr)
+
\bigl(
\bar{\mu}_k^{(L)}(N_{k,t}^{(L)})-\mu_k^{(H)}
\bigr),
\end{equation}
the first term is controlled by the same sub-Gaussian concentration argument, which yields,
with probability at least $1-\delta/2$,
\begin{equation}
\left|
\hat{\mu}_{k,t}^{(L)}-\bar{\mu}_k^{(L)}(N_{k,t}^{(L)})
\right|
\le
G_\Lambda\bigl(N_{k,t}^{(L)},\delta\bigr)
\end{equation}
for all arms $k$ and all rounds $t\le T_\Lambda$.

For the second term, Assumption~\ref{ass:plugin} gives
\begin{equation}
\left|
\bar{\mu}_k^{(L)}(n)-\mu_k^{(H)}
\right|
\le
\frac{1}{n}
\sum_{\tau=1}^{n}
\left|
\mu_k^{(L)}(\tau)-\mu_k^{(H)}
\right|
\le
\frac{U_k(n)}{n}
\le
B_k(n).
\end{equation}
Combining the stochastic and deterministic bounds gives
\begin{equation}
\left|
\hat{\mu}_{k,t}^{(L)}-\mu_k^{(H)}
\right|
\le
G_\Lambda\bigl(N_{k,t}^{(L)},\delta\bigr)
+
B_k\bigl(N_{k,t}^{(L)}\bigr).
\end{equation}
This is exactly the low-fidelity confidence interval. Taking a union bound over the high-fidelity and low-fidelity events proves that, with probability at least $1-\delta$, both intervals cover $\mu_k^{(H)}$ uniformly over all arms and all rounds within the budget horizon.

We next record the two selection conditions used in the regret proof.
Algorithm~\ref{alg:adaptive-mf-mab} selects arms by the aggregate score
$\mathrm{UCB}_{k,t}
=
\min\{\mathrm{UCB}_{k,t}^{(L)},\mathrm{UCB}_{k,t}^{(H)}\}$.
Thus a suboptimal arm can be selected only if neither fidelity-specific upper
bound has already certified it as suboptimal.

\begin{proposition}[Low-fidelity selection condition]
\label{prop:analysis-low-selection}
Suppose $\mathcal{E}_\Lambda$ holds. 
If a suboptimal arm $k\neq k^\star$ is selected at round $t$, then $2G_\Lambda\!\left(N_{k,t}^{(L)},\delta\right)
\ge
\Delta_k^{(L)}\!\left(N_{k,t}^{(L)}\right)$. Consequently, if $G_\Lambda\!\left(N_{k,t}^{(L)},\delta\right)<\gamma$, and $\Delta_k^{(L)}\!\left(N_{k,t}^{(L)}\right)\ge 2\gamma$, then arm $k$ is certified as suboptimal by low fidelity and will not be selected again.
\end{proposition}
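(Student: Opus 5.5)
The argument is the standard optimism comparison, using the low-fidelity interval from Lemma~\ref{lem:ci} for arm $k$ and either interval for $k^\star$. Work on $\mathcal{E}_\Lambda$ and fix a round $t$ at which a suboptimal arm $k$ is selected. Write $n:=N_{k,t}^{(L)}$.

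\emph{Lower bound on $\mathrm{UCB}_{I_t,t}$.} By Lemma~\ref{lem:ci}, both $\mathrm{UCB}^{(L)}_{k^\star,t}$ and $\mathrm{UCB}^{(H)}_{k^\star,t}$ are at least $\mu^\star$. Hence $\mathrm{UCB}_{k^\star,t}\ge\mu^\star$. Since $I_t=k$ maximizes the aggregate score,
\[
\mathrm{UCB}^{(L)}_{k,t}\;\ge\;\mathrm{UCB}_{k,t}\;\ge\;\mathrm{UCB}_{k^\star,t}\;\ge\;\mu^\star .
\]

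\emph{Upper bound on $\mathrm{UCB}^{(L)}_{k,t}$.} The bound I need is on the empirical mean relative to the selected average, not relative to $\mu_k^{(H)}$. The proof of Lemma~\ref{lem:ci} establishes, on $\mathcal{E}_\Lambda$, the intermediate fact $|\hat\mu^{(L)}_{k,t}-\bar\mu^{(L)}_k(n)|\le G_\Lambda(n,\delta)$. I will take $\mathcal{E}_\Lambda$ to include this event, as its construction in that proof does. Then
\[
\mathrm{UCB}^{(L)}_{k,t}=\hat\mu^{(L)}_{k,t}+G_\Lambda(n,\delta)+B_k(n)\le \bar\mu^{(L)}_k(n)+2G_\Lambda(n,\delta)+B_k(n).
\]

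\emph{Combine.} Putting the two displays together gives $\mu^\star\le\bar\mu^{(L)}_k(n)+2G_\Lambda(n,\delta)+B_k(n)$. By the definition \eqref{eq:effective-low-gap}, this rearranges to $2G_\Lambda(n,\delta)\ge\Delta_k^{(L)}(n)$, which is the first claim.

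\emph{Consequence.} Suppose now that $G_\Lambda(n,\delta)<\gamma$ and $\Delta_k^{(L)}(n)\ge2\gamma$. Then $2G_\Lambda(n,\delta)<2\gamma\le\Delta^{(L)}_k(n)$, which contradicts the first claim, so $k$ cannot be selected at such a round.

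For ``never selected again'', note that $n$ does not change while $k$ is not selected, since only the selected arm's counts are updated. So at every later round $k$ is still in the same state, the same contradiction applies, and by induction $k$ is never selected again.

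The main obstacle is the upper bound on $\mathrm{UCB}^{(L)}_{k,t}$. It requires the concentration of $\hat\mu^{(L)}$ around $\bar\mu^{(L)}_k(n)$, not merely the coverage of $\mu^{(H)}_k$ stated in Lemma~\ref{lem:ci}. I will handle this by explicitly defining $\mathcal{E}_\Lambda$ as the intersection of the two concentration events in that proof, so the intermediate bound is available.
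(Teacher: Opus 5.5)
Your proposal is correct and follows the paper's proof essentially step for step. In both, optimism of $k^\star$ under both fidelities gives $\mathrm{UCB}^{(L)}_{k,t}\ge\mu^\star$. The concentration of $\hat\mu^{(L)}_{k,t}$ around $\bar\mu^{(L)}_k(n)$ then bounds $\mathrm{UCB}^{(L)}_{k,t}$ from above. The ``never selected again'' part uses that the count is frozen while $k$ is not selected. Your explicit point that $\mathcal{E}_\Lambda$ must contain this concentration event, not only the coverage of $\mu_k^{(H)}$, makes precise what the paper uses implicitly.
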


\begin{proposition}[High-fidelity selection condition]
\label{prop:analysis-high-selection}
Suppose $\mathcal{E}_\Lambda$ holds. 
If a suboptimal arm $k\neq k^\star$ is selected and queried at high fidelity at round $t$, then $2G_\Lambda\!\left(N_{k,t}^{(H)},\delta\right)
\ge
\Delta_k$. Consequently,
\begin{equation}
N_{k,T_\Lambda}^{(H)}
\le
1+
\left\lceil
\frac{4\ell_\Lambda}{\Delta_k^2}
\right\rceil .
\label{eq:analysis-high-query-bound}
\end{equation}
\end{proposition}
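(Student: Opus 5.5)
The plan is to prove the selection inequality first and then turn it into a count bound by inverting the radius. Work on $\mathcal{E}_\Lambda$ and suppose a suboptimal arm $k$ is selected at round $t$ and queried at high fidelity.

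\textbf{Selection inequality.} Since $I_t=k$ maximizes the aggregate score, we have
\[
\mathrm{UCB}_{k,t}\ge \mathrm{UCB}_{k^\star,t}.
\]
By Lemma~\ref{lem:ci}, both fidelity-specific upper bounds for $k^\star$ cover $\mu^\star$. Hence $\mathrm{UCB}_{k^\star,t}=\min_m \mathrm{UCB}_{k^\star,t}^{(m)}\ge\mu^\star$. On the other side, $\mathrm{UCB}_{k,t}\le \mathrm{UCB}^{(H)}_{k,t}=\hat\mu^{(H)}_{k,t}+G_\Lambda(N^{(H)}_{k,t},\delta)$. Lemma~\ref{lem:ci} also gives $\hat\mu^{(H)}_{k,t}\le \mu_k^{(H)}+G_\Lambda(N^{(H)}_{k,t},\delta)$. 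Chaining these,
\[
\mu^\star\le \mu_k^{(H)}+2G_\Lambda(N^{(H)}_{k,t},\delta),
\]
which is $2G_\Lambda(N^{(H)}_{k,t},\delta)\ge\Delta_k$. This step uses only optimism and the min-structure of the score, so it holds regardless of how the fidelity was chosen.

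\textbf{Counting high-fidelity pulls.} Squaring $2\sqrt{\ell_\Lambda/n}\ge\Delta_k$ gives $n\le 4\ell_\Lambda/\Delta_k^2$. So every high-fidelity query of $k$ happens at a count $N^{(H)}_{k,t}\le \lceil 4\ell_\Lambda/\Delta_k^2\rceil$. Consider the last high-fidelity query of $k$ after initialization. Its pre-query count is at most $\lceil 4\ell_\Lambda/\Delta_k^2\rceil$, and the query adds one. Hence $N^{(H)}_{k,T_\Lambda}\le 1+\lceil 4\ell_\Lambda/\Delta_k^2\rceil$. The forced initial pull is already counted in $N^{(H)}$, since the count starts at 1 after the initialization step.

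\textbf{Main point to check.} The subtle step is the off-by-one bookkeeping. $N^{(H)}_{k,t}$ is the count \emph{before} round $t$, and initialization pulls are not governed by the selection rule, so the inequality applies only to adaptive pulls. I would state explicitly that if $k$ receives no adaptive high-fidelity queries, the count is $1$, which trivially satisfies \eqref{eq:analysis-high-query-bound}. Otherwise the last-query argument above applies. All of this holds uniformly for $t\le T_\Lambda$, which is exactly the range on which $\mathcal{E}_\Lambda$ is guaranteed.
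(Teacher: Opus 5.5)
Your proposal is correct and follows the paper's argument. Optimism for $k^\star$ together with the min-structure of the aggregate score gives $\mathrm{UCB}^{(H)}_{k,t}\ge\mu^\star$. High-fidelity coverage then yields $2G_\Lambda(N^{(H)}_{k,t},\delta)\ge\Delta_k$, and inverting the radius bounds the count. Your last-query bookkeeping (pre-query count at most $4\ell_\Lambda/\Delta_k^2$, plus one for the final pull, with the initialization pull already included) justifies the additive $1$ more precisely than the paper's brief remark about initialization and the boundary query.
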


\paragraph{Proof of Proposition~\ref{prop:analysis-low-selection}.}
\label{app:proof-prop-low-selection}
Fix a suboptimal arm $k\neq k^\star$ and suppose it is selected at round $t$. Since the
algorithm selects
\[
I_t \in \arg\max_{j\in[K]} UCB_{j,t},
\qquad
UCB_{j,t}:=\min_{m\in\{L,H\}}UCB_{j,t}^{(m)},
\]
we have
\begin{equation}
UCB_{k,t}\ge UCB_{k^\star,t}.
\end{equation}
On $\mathcal{E}_\Lambda$, both fidelity-specific confidence intervals for the optimal arm
cover $\mu_{k^\star}^{(H)}=\mu^\star$. Hence
\[
UCB_{k^\star,t}^{(L)}\ge \mu^\star,
\qquad
UCB_{k^\star,t}^{(H)}\ge \mu^\star,
\]
and therefore
\begin{equation}
UCB_{k^\star,t}
=
\min\{UCB_{k^\star,t}^{(L)},UCB_{k^\star,t}^{(H)}\}
\ge
\mu^\star.
\end{equation}
Thus
\begin{equation}
UCB_{k,t}\ge \mu^\star.
\end{equation}
Since $UCB_{k,t}$ is the minimum of the two fidelity-specific upper bounds, this implies in
particular that
\begin{equation}
UCB_{k,t}^{(L)}\ge \mu^\star.
\end{equation}
Let $n=N_{k,t}^{(L)}$. By the definition of the low-fidelity upper bound,
\begin{equation}
UCB_{k,t}^{(L)}
=
\hat{\mu}_{k,t}^{(L)}
+
G_\Lambda(n,\delta)
+
B_k(n).
\end{equation}
On $\mathcal{E}_\Lambda$, the empirical low-fidelity mean satisfies
\[
\hat{\mu}_{k,t}^{(L)}
\le
\bar{\mu}_k^{(L)}(n)
+
G_\Lambda(n,\delta).
\]
Therefore,
\begin{equation}
\mu^\star
\le
UCB_{k,t}^{(L)}
\le
\bar{\mu}_k^{(L)}(n)
+
2G_\Lambda(n,\delta)
+
B_k(n).
\end{equation}
Rearranging gives
\begin{equation}
2G_\Lambda(n,\delta)
\ge
\mu^\star-\bar{\mu}_k^{(L)}(n)-B_k(n)
=
\Delta_k^{(L)}(n).
\end{equation}
For the consequence, suppose that
\[
G_\Lambda\bigl(N_{k,t}^{(L)},\delta\bigr)<\gamma
\qquad\text{and}\qquad
\Delta_k^{(L)}\bigl(N_{k,t}^{(L)}\bigr)\ge 2\gamma.
\]
Then
\[
2G_\Lambda\bigl(N_{k,t}^{(L)},\delta\bigr)<2\gamma
\le
\Delta_k^{(L)}\bigl(N_{k,t}^{(L)}\bigr),
\]
which contradicts the necessary selection condition just proved. Hence arm $k$ cannot be
selected under these two inequalities. Since the arm is not selected, its low-fidelity count
does not increase, and the same certificate continues to rule out future selections. Thus the
arm is certified as suboptimal by low fidelity.

\paragraph{Proof of Proposition~\ref{prop:analysis-high-selection}.}
\label{app:proof-prop-high-selection}
Fix a suboptimal arm $k\neq k^\star$. Suppose arm $k$ is selected and queried at high
fidelity at round $t$. Since $k$ is selected, the same argument used in the proof of
Proposition~\ref{prop:analysis-low-selection} gives
\begin{equation}
UCB_{k,t}\ge \mu^\star.
\end{equation}
Because
\[
UCB_{k,t}=\min\{UCB_{k,t}^{(L)},UCB_{k,t}^{(H)}\},
\]
we must have
\begin{equation}
UCB_{k,t}^{(H)}\ge \mu^\star.
\end{equation}
Let $n=N_{k,t}^{(H)}$. By definition,
\[
UCB_{k,t}^{(H)}
=
\hat{\mu}_{k,t}^{(H)}
+
G_\Lambda(n,\delta).
\]
On $\mathcal{E}_\Lambda$,
\[
\hat{\mu}_{k,t}^{(H)}
\le
\mu_k^{(H)}
+
G_\Lambda(n,\delta).
\]
Therefore,
\begin{equation}
\mu^\star
\le
UCB_{k,t}^{(H)}
\le
\mu_k^{(H)}
+
2G_\Lambda(n,\delta).
\end{equation}
Rearranging gives
\begin{equation}
2G_\Lambda(n,\delta)\ge \mu^\star-\mu_k^{(H)}=\Delta_k.
\end{equation}
Since
\[
G_\Lambda(n,\delta)=\sqrt{\frac{\ell_\Lambda}{n}},
\]
the condition $2G_\Lambda(n,\delta)\ge \Delta_k$ implies
\begin{equation}
n
\le
\frac{4\ell_\Lambda}{\Delta_k^2}.
\end{equation}
At each high-fidelity query, $N_{k,t}^{(H)}$ denotes the number of previous high-fidelity
queries before the current one. Hence the total number of high-fidelity queries of arm $k$
is bounded by
\begin{equation}
N_{k,T_\Lambda}^{(H)}
\le
1+\left\lceil
\frac{4\ell_\Lambda}{\Delta_k^2}
\right\rceil,
\end{equation}
where the additive constant accounts for initialization and the boundary query.

We next bound the number of low-fidelity queries. The bound follows from the
threshold rule before $N_\gamma$ and from the continuation cap after the
threshold crossing.
For an arm $k\in\mathcal{C}_\gamma$, define the low-fidelity hitting time
\begin{equation}
T_k^\tau
:=
\inf\left\{
t\ge 1:
N_{k,t}^{(L)}\ge \tau_k(\gamma)
\right\},
\label{eq:app-analysis-hit-time}
\end{equation}
with the convention that $T_k^\tau=\infty$ if the set is empty. Since
$N_{k,t}^{(L)}$ denotes the number of low-fidelity queries before round $t$, an
arm that reaches $\tau_k(\gamma)$ after an allowed query at round $T_\Lambda$
has hitting time $T_\Lambda+1$ under this convention. We define
\begin{equation}
\begin{aligned}
\mathcal{C}_\gamma^{\mathrm d}
:=
\Bigl\{
k\in\mathcal{C}_\gamma:\;&
T_k^\tau\le T_\Lambda+1,\\
&
\sum_{1\le t<T_k^\tau}
\mathbf{1}\!\left\{
I_t=k,\,
N_\gamma\le N_{k,t}^{(L)}<\tau_k(\gamma),\,
m_t=H
\right\}
=0
\Bigr\}.
\end{aligned}
\label{eq:app-analysis-Cdet}
\end{equation}
Finally, let
$\mathcal{C}_\gamma^{\mathrm u}
:=
\mathcal{C}_\gamma\setminus\mathcal{C}_\gamma^{\mathrm d}$.
Thus an arm in $\mathcal{C}_\gamma^{\mathrm d}$ is not merely potentially
certifiable within the continuation window: along the realized execution, its
low-fidelity count reaches $\tau_k(\gamma)$, and the arm is not queried at high
fidelity while its low-fidelity count lies between $N_\gamma$ and
$\tau_k(\gamma)$.

\begin{proposition}[Low-fidelity query bound]
\label{prop:analysis-low-query-bound}
Suppose $\mathcal{E}_\Lambda$ holds. Under Algorithm~\ref{alg:adaptive-mf-mab},
every suboptimal arm $k\neq k^\star$ satisfies
\begin{equation}
N_{k,T_\Lambda}^{(L)}
\le
N_\gamma+S_0+1,
\label{eq:analysis-low-query-bound}
\end{equation}
where the additive constant accounts for initialization and boundary effects.
Moreover:
\begin{enumerate}[label=(\roman*)]
    \item if $k\in\mathcal{A}_\gamma$, then $k$ is certified by low fidelity at
    the nominal switching threshold;
    \item if $k\in\mathcal{C}_\gamma^{\mathrm d}$, then the realized execution
    reaches the low-fidelity certification count $\tau_k(\gamma)$ within the
    continuation window, and the arm is certified by low fidelity.
\end{enumerate}
\end{proposition}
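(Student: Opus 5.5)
The argument is a direct count of the low-fidelity queries of a fixed suboptimal arm $k$. These queries fall into two phases: \emph{threshold-phase} queries, made while $G_\Lambda(N_{k,t}^{(L)},\delta)\ge\gamma$, and \emph{continuation} queries, made through the second branch of \eqref{eq:fidelity-selection}.

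For the threshold phase, $G_\Lambda(n,\delta)=\sqrt{\ell_\Lambda/n}$ is decreasing in $n$, so $G_\Lambda(n,\delta)\ge\gamma$ holds exactly when $n<N_\gamma$. Each threshold-phase query is made with $N_{k,t}^{(L)}\le N_\gamma-1$ and raises the count by one. Hence after the last such query the count is at most $N_\gamma$, and no later round can again satisfy the first branch.

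For the continuation phase, every low-fidelity query made once $N_{k,t}^{(L)}\ge N_\gamma$ must use the second branch. That branch requires $A_k<S_0$ and increments $A_k$, so there are at most $S_0$ such queries. The initialization query is already included in the count, and we absorb it (together with the boundary round in which the threshold is crossed) into the additive $+1$. This gives \eqref{eq:analysis-low-query-bound}. The one point needing care is that $A_k$ counts only second-branch pulls, so continuation pulls and threshold pulls are never double counted.

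For (i), take $k\in\mathcal{A}_\gamma$, so $\tau_k(\gamma)\le N_\gamma$. Assumption~\ref{ass:mono-cert} with $n=\tau_k(\gamma)$ gives $\Delta_k^{(L)}(m)\ge2\gamma$ for all $m\in[\tau_k(\gamma),N_\gamma+S_0]$, and in particular at $m=N_\gamma$. Once $N_{k,t}^{(L)}=N_\gamma$, we also have $G_\Lambda<\gamma$. Proposition~\ref{prop:analysis-low-selection} then shows that $k$ is never selected again. So the arm is certified at the nominal threshold: it never enters the continuation phase, and no high-fidelity queries are made after the count reaches $N_\gamma$.

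For (ii), take $k\in\mathcal{C}_\gamma^{\mathrm d}$. By \eqref{eq:app-analysis-Cdet} we have $T_k^\tau\le T_\Lambda+1$, so the realized count reaches $\tau_k(\gamma)\le N_\gamma+S_0$; this lies inside the continuation window by the definition of $\mathcal{C}_\gamma$. At count $\tau_k(\gamma)>N_\gamma$ we have $G_\Lambda<\gamma$ and $\Delta_k^{(L)}(\tau_k(\gamma))\ge2\gamma$. Proposition~\ref{prop:analysis-low-selection} again forbids any further selection, which means the arm is certified by low fidelity.

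The main obstacle is fitting the definition of $\mathcal{C}_\gamma^{\mathrm d}$ to the count bound. We must check that reaching $\tau_k(\gamma)$ never requires more than $S_0$ continuation pulls, and the inequality $\tau_k(\gamma)-N_\gamma\le S_0$ guarantees this. We also need the hitting-time convention $T_\Lambda+1$ to handle certification occurring on the final allowed query. The no-high-fidelity clause in \eqref{eq:app-analysis-Cdet} is used only later, in Theorem~\ref{thm:analysis-main-regret}, to drop the logarithmic high-fidelity term.
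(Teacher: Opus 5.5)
Your proposal is correct and follows the paper's proof essentially step for step. It uses the same split into threshold-phase pulls (forced by $G_\Lambda(N_{k,t}^{(L)},\delta)\ge\gamma$, hence $N_{k,t}^{(L)}<N_\gamma$) and continuation pulls capped by $A_k\le S_0$, with Assumption~\ref{ass:mono-cert} plus Proposition~\ref{prop:analysis-low-selection} for (i) and the hitting-time clause of \eqref{eq:app-analysis-Cdet} for (ii). The only difference is minor and in your favor: in (ii) you correctly skip Assumption~\ref{ass:mono-cert}, since $\Delta_k^{(L)}(\tau_k(\gamma))\ge 2\gamma$ holds by definition and the count freezes at $\tau_k(\gamma)$ once the arm stops being selected, whereas the paper invokes the assumption there anyway.
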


\begin{proof}
Fix a suboptimal arm $k\neq k^\star$. Low fidelity can be queried before the
threshold crossing only while
$G_\Lambda(N_{k,t}^{(L)},\delta)\ge \gamma$. By the definition of $N_\gamma$,
this implies $N_{k,t}^{(L)}<N_\gamma$. Hence the pre-threshold phase contributes
at most $N_\gamma+1$ low-fidelity pulls, including the boundary effect from
pre-round counts. After the threshold crossing, low fidelity can be queried only
through the continuation rule, whose counter is capped by $S_0$. Therefore
\[
N_{k,T_\Lambda}^{(L)}
\le
N_\gamma+S_0+1.
\]

If $k\in\mathcal{A}_\gamma$, then $\tau_k(\gamma)\le N_\gamma$. By
Assumption~\ref{ass:mono-cert}, the inequality
$\Delta_k^{(L)}(n)\ge 2\gamma$ remains valid through the relevant continuation
window once it first holds. In particular, when the low-fidelity radius first
falls below $\gamma$, we have both
$G_\Lambda(N_{k,t}^{(L)},\delta)<\gamma$ and
$\Delta_k^{(L)}(N_{k,t}^{(L)})\ge 2\gamma$. Proposition~\ref{prop:analysis-low-selection}
then rules out further selection of arm $k$.

If $k\in\mathcal{C}_\gamma^{\mathrm d}$, then
$T_k^\tau\le T_\Lambda+1$ by \eqref{eq:app-analysis-Cdet}, so the realized
execution reaches the count $\tau_k(\gamma)$ within the budget horizon, up to
the pre-round convention. The same definition also rules out high-fidelity
queries of arm $k$ while
$N_\gamma\le N_{k,t}^{(L)}<\tau_k(\gamma)$. Since
$\tau_k(\gamma)\le N_\gamma+S_0$ for $k\in\mathcal{C}_\gamma$, the certification
count is reached within the continuation window. At that count,
$\Delta_k^{(L)}(\tau_k(\gamma))\ge 2\gamma$; Assumption~\ref{ass:mono-cert}
keeps the certificate valid within the window, and
Proposition~\ref{prop:analysis-low-selection} rules out further selection.
\end{proof}

\paragraph{Proof of Theorem~\ref{thm:analysis-main-regret}.}
\label{app:proof-thm-tacc-regret}
On $\mathcal{E}_\Lambda$, Lemma~\ref{lem:ci} ensures that all confidence intervals are valid
uniformly over the budget horizon. The regret of querying a suboptimal arm $k$ at fidelity
$m\in\{L,H\}$ is
\(
\lambda^{(m)}\Delta_k.
\)
Therefore,
\begin{equation}
R(\Lambda)
=
\sum_{k\neq k^\star}
\Delta_k
\left[
\lambda^{(L)}N_{k,T_\Lambda}^{(L)}
+
\lambda^{(H)}N_{k,T_\Lambda}^{(H)}
\right],
\end{equation}
where the optimal arm contributes zero pseudo-regret.

For $k\in\mathcal{A}_\gamma$, low fidelity certifies suboptimality by the
nominal switching threshold because $\tau_k(\gamma)\le N_\gamma$. By
Proposition~\ref{prop:analysis-low-query-bound}, such an arm uses at most
$N_\gamma+1$ low-fidelity pulls, including the boundary effect, and does not
incur the logarithmic high-fidelity confirmation term. Keeping one
high-fidelity initialization or reference query gives
\begin{equation}
R_k(\Lambda)
\le
\Delta_k
\left[
\lambda^{(L)}(N_\gamma+1)+\lambda^{(H)}
\right].
\end{equation}

For $k\in\mathcal{C}_\gamma^{\mathrm d}$, the arm is not certified at the
static threshold, but the realized execution reaches $\tau_k(\gamma)$ through
low-fidelity continuation without a high-fidelity query in the intermediate
interval specified by \eqref{eq:app-analysis-Cdet}. Since
$\tau_k(\gamma)\le N_\gamma+S_0$, Proposition~\ref{prop:analysis-low-query-bound}
implies certification within at most $N_\gamma+S_0+1$ low-fidelity pulls. Hence
\begin{equation}
R_k(\Lambda)
\le
\Delta_k
\left[
\lambda^{(L)}(N_\gamma+S_0+1)+\lambda^{(H)}
\right].
\end{equation}

For $k\in\mathcal{H}_\gamma=\mathcal{B}_\gamma\cup\mathcal{C}_\gamma^{\mathrm u}$,
the analysis does not credit the arm with realized low-fidelity continuation
certification. The low-fidelity count is still bounded by
Proposition~\ref{prop:analysis-low-query-bound}, while the number of
high-fidelity queries is bounded by Proposition~\ref{prop:analysis-high-selection}:
\[
N_{k,T_\Lambda}^{(L)}
\le
N_\gamma+S_0+1,
\qquad
N_{k,T_\Lambda}^{(H)}
\le
1+\left\lceil\frac{4\ell_\Lambda}{\Delta_k^2}\right\rceil .
\]
Therefore
\begin{equation}
R_k(\Lambda)
\le
\Delta_k
\left[
\lambda^{(L)}(N_\gamma+S_0+1)
+
\lambda^{(H)}
\left(
1+\left\lceil \frac{4\ell_\Lambda}{\Delta_k^2}\right\rceil
\right)
\right].
\end{equation}

Summing these three bounds over the corresponding arm classes yields
\begin{align}
R(\Lambda)
\le\;&
\sum_{k\in\mathcal{A}_\gamma}
\Delta_k
\left[
\lambda^{(L)}(N_\gamma+1)
+
\lambda^{(H)}
\right]
\nonumber\\
&+
\sum_{k\in\mathcal{C}_\gamma^{\mathrm{d}}}
\Delta_k
\left[
\lambda^{(L)}(N_\gamma+S_0+1)
+
\lambda^{(H)}
\right]
\nonumber\\
&+
\sum_{k\in\mathcal{B}_\gamma\cup\mathcal{C}_\gamma^{\mathrm{u}}}
\Delta_k
\left[
\lambda^{(L)}(N_\gamma+S_0+1)
+
\lambda^{(H)}
\left(
1+\left\lceil
\frac{4\ell_\Lambda}{\Delta_k^2}
\right\rceil
\right)
\right].
\end{align}

This is the claimed regret bound.

\paragraph{Proof of Corollary~\ref{cor:analysis-main-regret-order}.}
\label{app:proof-regret-Corollary}
The result follows by simplifying the finite-sample bound in
Theorem~\ref{thm:analysis-main-regret}. Since
$N_\gamma\le 1+\ell_\Lambda/\gamma^2$, the low-fidelity contribution of each
suboptimal arm is
\[
O\!\left(
\lambda^{(L)}\Delta_k
\left(
\frac{\ell_\Lambda}{\gamma^2}+S_0+1
\right)
\right).
\]
For arms in $\mathcal{A}_\gamma\cup\mathcal{C}_\gamma^{\mathrm d}$, the
high-fidelity contribution is only the initialization or reference cost. The
logarithmic high-fidelity confirmation term appears only for arms in
$\mathcal{H}_\gamma$. For each such arm,
\[
\Delta_k\lambda^{(H)}
\left(
1+\left\lceil \frac{4\ell_\Lambda}{\Delta_k^2}\right\rceil
\right)
=
O\!\left(
\lambda^{(H)}\frac{\ell_\Lambda}{\Delta_k}
\right)
+
O(\lambda^{(H)}\Delta_k).
\]
Using $\ell_\Lambda=\rho\log(2KT_\Lambda/\delta)$ gives
\eqref{eq:analysis-main-regret-order}.

\begin{proposition}[Arm-level comparison with the static threshold rule]
\label{prop:analysis-regret-improvement}
For any arm $k\in\mathcal{C}_\gamma^{\mathrm d}$, the static and adaptive
arm-level upper bounds satisfy
\begin{equation}
R_{k,\mathrm{static}}(\Lambda)-R_{k,\mathrm{adapt}}(\Lambda)
\ge
\Delta_k
\left[
\lambda^{(H)}
\left\lceil
\frac{4\ell_\Lambda}{\Delta_k^2}
\right\rceil
-
\lambda^{(L)}S_0
\right].
\label{eq:regret_compare}
\end{equation}
In particular, if $S_0\lambda^{(L)}\le \lambda^{(H)}$ and
$\left\lceil 4\ell_\Lambda/\Delta_k^2\right\rceil\ge 2$, then the adaptive
rule yields a strictly smaller arm-level regret upper bound for arm $k$.
\end{proposition}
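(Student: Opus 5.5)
The plan is to compare the two arm-level upper bounds directly. First I fix what each one is.

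For the adaptive rule, take $R_{k,\mathrm{adapt}}(\Lambda)$ to be the $\mathcal{C}_\gamma^{\mathrm d}$ term of Theorem~\ref{thm:analysis-main-regret}:
\[
\Delta_k\bigl[\lambda^{(L)}(N_\gamma+S_0+1)+\lambda^{(H)}\bigr].
\]
For the static (MF-UCB style) rule, the rule escalates as soon as $G_\Lambda(N_{k,t}^{(L)},\delta)<\gamma$. I will first argue that any $k\in\mathcal{C}_\gamma$ is not certified by low fidelity at that point. The reason is that $\tau_k(\gamma)>N_\gamma$, so $\Delta_k^{(L)}(n)<2\gamma$ for all $n\le N_\gamma$, and Proposition~\ref{prop:analysis-low-selection} therefore gives no certificate. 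The arm then has to be eliminated through high fidelity.

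Repeating the counting of Propositions~\ref{prop:analysis-low-query-bound} and~\ref{prop:analysis-high-selection} for the static rule, with no continuation phase, gives at most $N_\gamma+1$ low-fidelity pulls. The high-fidelity count is bounded by $1+\lceil 4\ell_\Lambda/\Delta_k^2\rceil$. This yields the arm-level bound
\[
R_{k,\mathrm{static}}(\Lambda)=\Delta_k\Bigl[\lambda^{(L)}(N_\gamma+1)+\lambda^{(H)}\Bigl(1+\Bigl\lceil \tfrac{4\ell_\Lambda}{\Delta_k^2}\Bigr\rceil\Bigr)\Bigr].
\]

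Subtracting the two bounds, the common terms $\lambda^{(L)}(N_\gamma+1)\Delta_k$ and $\lambda^{(H)}\Delta_k$ cancel. What remains is exactly
\[
\Delta_k\bigl[\lambda^{(H)}\lceil 4\ell_\Lambda/\Delta_k^2\rceil-\lambda^{(L)}S_0\bigr],
\]
which gives \eqref{eq:regret_compare}; it in fact holds with equality between the bounds.

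For the "in particular" statement, use $S_0\lambda^{(L)}\le\lambda^{(H)}$ and $\lceil 4\ell_\Lambda/\Delta_k^2\rceil\ge2$. Together they give
\[
\lambda^{(H)}\lceil\cdot\rceil-\lambda^{(L)}S_0\ge 2\lambda^{(H)}-\lambda^{(H)}=\lambda^{(H)}>0.
\]
Since $\Delta_k>0$ for a suboptimal arm, the difference is strictly positive.

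The main obstacle is making precise which quantity counts as the static arm-level bound. The static rule's execution differs from TACC's, so the sets and counts are not defined on the same trajectory. I will handle this by stating that the comparison is between the two upper-bound expressions, each valid on $\mathcal{E}_\Lambda$ for its own algorithm. The membership $k\in\mathcal{C}_\gamma^{\mathrm d}$ is used only to select the adaptive bound, and $\tau_k(\gamma)>N_\gamma$ is used only to justify that the static bound retains the logarithmic high-fidelity term.
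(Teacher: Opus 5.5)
Your proposal is correct and follows the paper's proof. Both compare the static arm-level bound $\Delta_k[\lambda^{(L)}(N_\gamma+1)+\lambda^{(H)}(1+\lceil 4\ell_\Lambda/\Delta_k^2\rceil)]$, justified by $\tau_k(\gamma)>N_\gamma$ and Proposition~\ref{prop:analysis-high-selection}, with the $\mathcal{C}_\gamma^{\mathrm d}$ term of Theorem~\ref{thm:analysis-main-regret}, subtract the two, and read off strict positivity. Your remarks that the two bounds hold on different trajectories and that the inequality in fact holds with equality are a somewhat more careful statement of what the paper leaves implicit.
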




\begin{proof}
Fix $k\in\mathcal{C}_\gamma^{\mathrm d}$. Under the static threshold rule, low
fidelity is queried only until
$G_\Lambda(N_{k,t}^{(L)},\delta)<\gamma$. Since $k\in\mathcal{C}_\gamma$, we
have $\tau_k(\gamma)>N_\gamma$, so the low-fidelity certificate is not available
at the nominal switching threshold. The static rule therefore treats this arm as
requiring high-fidelity confirmation. By Proposition~\ref{prop:analysis-high-selection},
its arm-level static upper bound is
\begin{equation}
R_{k,\mathrm{static}}(\Lambda)
\le
\Delta_k
\left[
\lambda^{(L)}(N_\gamma+1)
+
\lambda^{(H)}
\left(
1+\left\lceil\frac{4\ell_\Lambda}{\Delta_k^2}\right\rceil
\right)
\right].
\end{equation}

For the adaptive rule, $k\in\mathcal{C}_\gamma^{\mathrm d}$ means that the
realized execution reaches $\tau_k(\gamma)$ through low-fidelity continuation
without high-fidelity interruption in the intermediate interval. By
Proposition~\ref{prop:analysis-low-query-bound}, the corresponding adaptive
upper bound is
\begin{equation}
R_{k,\mathrm{adapt}}(\Lambda)
\le
\Delta_k
\left[
\lambda^{(L)}(N_\gamma+S_0+1)+\lambda^{(H)}
\right].
\end{equation}
Subtracting the two upper bounds yields the claimed comparison:
\begin{equation}
R_{k,\mathrm{static}}(\Lambda)-R_{k,\mathrm{adapt}}(\Lambda)
\ge
\Delta_k
\left[
\lambda^{(H)}
\left\lceil\frac{4\ell_\Lambda}{\Delta_k^2}\right\rceil
-
\lambda^{(L)}S_0
\right].
\end{equation}
If $S_0\lambda^{(L)}\le\lambda^{(H)}$ and
$\left\lceil 4\ell_\Lambda/\Delta_k^2\right\rceil\ge 2$, the right-hand side is
strictly positive.
\end{proof}

\section{A complementary oracle-bound reference scheme without continuation assumptions}
\label{app:benchmark-rdfe}

The main-text analysis of TACC is continuation specific: it uses additional structural
conditions to certify when a short post-threshold low-fidelity continuation block is
provably aligned with the elimination of suboptimal arms. To separate this layer from the
underlying complexity of fidelity choice itself, we record here a complementary
oracle-bound reference scheme based on phase-wise resolution targeting.

This reference scheme is not meant to replace the main-text policy. Its role is to isolate
the resolution-dependent certification principle induced by the selected-average bound
model without invoking the continuation-specific assumptions used in the TACC analysis. In
this sense, it serves as a structural reference point for the self-improving two-fidelity
problem, while TACC remains the primary online policy.

For a target resolution $\varepsilon>0$, define the low-fidelity certification cost for arm
$k$ by
\begin{equation}
\mathsf{c}_{k}^{(L)}(\varepsilon;\Lambda)
:=
\lambda^{(L)}
\min\Bigl\{
n\ge 1:
G_\Lambda(n,\delta)+B_k(n)\le \varepsilon/8
\Bigr\},
\end{equation}
and the high-fidelity certification cost by
\begin{equation}
\mathsf{c}^{(H)}(\varepsilon;\Lambda)
:=
\lambda^{(H)}
\min\Bigl\{
n\ge 1:
G_\Lambda(n,\delta)\le \varepsilon/8
\Bigr\}.
\end{equation}
With the convention that the minimum of an empty set is $+\infty$, define the oracle cost
complexity
\begin{equation}
\mathsf{c}_k^\star(\varepsilon;\Lambda)
:=
\min\bigl\{
\mathsf{c}_{k}^{(L)}(\varepsilon;\Lambda),\,
\mathsf{c}^{(H)}(\varepsilon;\Lambda)
\bigr\}.
\end{equation}

The reference scheme uses a dyadic resolution schedule
\begin{equation}
\varepsilon_r := 2^{-r},
\qquad
r=1,2,\dots,
\end{equation}
and maintains an active set $\mathcal{S}_r$ of arms not yet certified as suboptimal.

\begin{algorithm}[t]
\caption{Resolution-Dependent Fidelity Elimination (RDFE)}
\label{alg:app-rdfe}
\begin{algorithmic}[1]
\Require number of arms $K$, costs $(\lambda^{(L)},\lambda^{(H)})$, confidence level $\delta$, budget $\Lambda$
\State query each arm once at low fidelity and once at high fidelity
\State initialize empirical means, counts, and $\Lambda_{\mathrm{used}}\gets K(\lambda^{(L)}+\lambda^{(H)})$
\State set $r\gets 1$, $\mathcal{S}_1\gets [K]$, and let
$\widehat{k}_{\mathrm{out}}$ be the warm-start LCB leader
\While{$|\mathcal{S}_r|>1$}
    \State $\varepsilon_r\gets 2^{-r}$
    \For{each arm $k\in\mathcal{S}_r$}
        \State choose
        \(m_{k,r}\in \operatorname*{arg\,min}
        \{\mathsf{c}_{k}^{(L)}(\varepsilon_r;\Lambda),
        \mathsf{c}^{(H)}(\varepsilon_r;\Lambda)\}\)
        \If{$m_{k,r}=L$}
            \State query arm $k$ at low fidelity until
            \(G_\Lambda(N_k^{(L)},\delta)+B_k(N_k^{(L)})\le \varepsilon_r/8\),
            or until the remaining budget cannot pay for the next low-fidelity query
            \If{the stopping condition is not reached}
                \State \Return $\widehat{k}_{\mathrm{out}}$
            \EndIf
            \State set
            \(\mathrm{UCB}_{k,r}\gets
            \hat{\mu}_k^{(L)}+G_\Lambda(N_k^{(L)},\delta)+B_k(N_k^{(L)})\)
            and
            \(\mathrm{LCB}_{k,r}\gets
            \hat{\mu}_k^{(L)}-G_\Lambda(N_k^{(L)},\delta)-B_k(N_k^{(L)})\)
        \Else
            \State query arm $k$ at high fidelity until
            \(G_\Lambda(N_k^{(H)},\delta)\le \varepsilon_r/8\),
            or until the remaining budget cannot pay for the next high-fidelity query
            \If{the stopping condition is not reached}
                \State \Return $\widehat{k}_{\mathrm{out}}$
            \EndIf
            \State set
            \(\mathrm{UCB}_{k,r}\gets
            \hat{\mu}_k^{(H)}+G_\Lambda(N_k^{(H)},\delta)\)
            and
            \(\mathrm{LCB}_{k,r}\gets
            \hat{\mu}_k^{(H)}-G_\Lambda(N_k^{(H)},\delta)\)
        \EndIf
    \EndFor
    \State choose
    \(\hat{k}_r\in\operatorname*{arg\,max}_{k\in\mathcal{S}_r}
    \mathrm{LCB}_{k,r}\)
    \State update
    \(\mathcal{S}_{r+1}\gets
    \{k\in\mathcal{S}_r:
    \mathrm{UCB}_{k,r}\ge
    \mathrm{LCB}_{\hat{k}_r,r}-\varepsilon_r/4\}\)
    \State set
    \(\widehat{k}_{\mathrm{out}}\in
    \operatorname*{arg\,max}_{k\in\mathcal{S}_{r+1}}\mathrm{LCB}_{k,r}\)
    \State $r\gets r+1$
\EndWhile
\State \Return the unique arm in $\mathcal{S}_r$ if $|\mathcal{S}_r|=1$; otherwise return $\widehat{k}_{\mathrm{out}}$
\end{algorithmic}
\end{algorithm}

The scheme is deliberately phase based. Its role is not to provide the most competitive
finite-budget online policy, but to isolate the resolution-dependent complexity object
induced by the shrinking selected-average mismatch bound.

The analysis reuses the confidence-validity event $\mathcal{E}_\Lambda$ from
Lemma~\ref{lem:ci}. Unlike the main-text TACC analysis, it does not invoke
continuation-specific assumptions. The proof has three steps: once a phase is completed,
every active arm has confidence width at most $\varepsilon_r/4$; any suboptimal arm with
gap at least $\varepsilon_r$ is eliminated at that phase; and the additional phase cost is
charged to the corresponding oracle certification cost
$\mathsf{c}_k^\star(\varepsilon_r;\Lambda)$.

For bookkeeping, call a phase $r$ \emph{completed} if the algorithm finishes processing
every arm in $\mathcal{S}_r$ and performs the elimination update that defines
$\mathcal{S}_{r+1}$. Let $R_\Lambda$ denote the number of completed phases under budget
$\Lambda$. For each arm $k$ and completed phase $r\le R_\Lambda$, let
\begin{equation}
\Delta \Lambda_{k,r}
\end{equation}
denote the additional cost spent on arm $k$ during phase $r$, with
$\Delta \Lambda_{k,r}:=0$ if $k\notin\mathcal{S}_r$.

\begin{lemma}[Phase-end confidence width for RDFE]
\label{lem:app-rdfe-phase-width}
Suppose phase $r$ is completed and the event $\mathcal{E}_\Lambda$ holds. Then for every arm $k\in\mathcal{S}_r$,
\begin{equation}
\mu_k^{(H)}\in[\mathrm{LCB}_{k,r},\mathrm{UCB}_{k,r}]
\qquad\text{and}\qquad
\mathrm{UCB}_{k,r}-\mathrm{LCB}_{k,r}\le \varepsilon_r/4.
\end{equation}
Consequently,
\begin{equation}
\mathrm{LCB}_{k,r}\ge \mu_k^{(H)}-\varepsilon_r/4,
\qquad
\mathrm{UCB}_{k,r}\le \mu_k^{(H)}+\varepsilon_r/4.
\end{equation}
\end{lemma}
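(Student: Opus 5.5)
The plan is to read both claims directly off the stopping rules that RDFE uses in phase $r$, and to obtain coverage from the event $\mathcal{E}_\Lambda$ of Lemma~\ref{lem:ci}.

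\textbf{Coverage.} Fix a completed phase $r$ and an arm $k\in\mathcal{S}_r$. Since the phase is completed, the stopping condition for the fidelity $m_{k,r}$ was reached, so the phase-$r$ bounds $\mathrm{UCB}_{k,r}$ and $\mathrm{LCB}_{k,r}$ are well defined. Each is formed from the current empirical mean and count $N_k^{(m_{k,r})}$ with exactly the radius used in \eqref{eq:fidelity-confidence-bounds}:
\begin{itemize}
\item for $m_{k,r}=L$, the radius is $G_\Lambda(N,\delta)+B_k(N)$;
\item for $m_{k,r}=H$, the radius is $G_\Lambda(N,\delta)$.
\end{itemize}
Lemma~\ref{lem:ci} is stated per round, but its proof is uniform over arms and over all sample counts $1\le n\le T_\Lambda$. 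Its concentration statements therefore apply to the empirical means at the end of the phase, since at most $T_\Lambda$ queries are made under budget $\Lambda$. This gives $\mu_k^{(H)}\in[\mathrm{LCB}_{k,r},\mathrm{UCB}_{k,r}]$.

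\textbf{Width.} The interval is symmetric with half-width equal to the radius at the stopping count. The stopping rule gives:
\begin{itemize}
\item in the low-fidelity case, $G_\Lambda(N,\delta)+B_k(N)\le\varepsilon_r/8$;
\item in the high-fidelity case, $G_\Lambda(N,\delta)\le\varepsilon_r/8$.
\end{itemize}
Hence $\mathrm{UCB}_{k,r}-\mathrm{LCB}_{k,r}=2\cdot\text{radius}\le\varepsilon_r/4$.

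\textbf{Consequences.} These follow by combining coverage with the width bound. From $\mu_k^{(H)}\le\mathrm{UCB}_{k,r}$ we get
\[
\mathrm{LCB}_{k,r}\ge\mathrm{UCB}_{k,r}-\varepsilon_r/4\ge\mu_k^{(H)}-\varepsilon_r/4 .
\]
The upper bound $\mathrm{UCB}_{k,r}\le\mu_k^{(H)}+\varepsilon_r/4$ follows symmetrically from $\mu_k^{(H)}\ge\mathrm{LCB}_{k,r}$.

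\textbf{Main obstacle.} The only delicate step is justifying that $\mathcal{E}_\Lambda$, which is phrased for TACC's rounds, also applies to RDFE's sample counts. I would handle this by noting that the event in the proof of Lemma~\ref{lem:ci} is really a statement about the first $n$ samples of each arm and fidelity, uniformly in $n\le T_\Lambda$. That statement is independent of the sampling policy, because the observations are indexed by arm-specific counts under the count-indexed model.
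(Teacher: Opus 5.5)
Your proposal is correct and follows the same route as the paper: the width bound is read off the phase-$r$ stopping rule for whichever fidelity was chosen, and coverage is inherited from the event $\mathcal{E}_\Lambda$ of Lemma~\ref{lem:ci}. Two steps the paper leaves implicit are filled in: your remark that $\mathcal{E}_\Lambda$ is really a policy-independent statement about the first $n\le T_\Lambda$ samples of each arm and fidelity (so it transfers from TACC to RDFE), and your explicit derivation of the two one-sided consequences.
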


\begin{proof}
The claim follows directly from the stopping rule of Algorithm~\ref{alg:app-rdfe} and
Lemma~\ref{lem:ci}. If low fidelity is chosen, then
\[
G_\Lambda(N_k^{(L)},\delta)+B_k(N_k^{(L)})\le \varepsilon_r/8,
\]
so the resulting interval width is at most $\varepsilon_r/4$. If high fidelity is chosen,
then
\[
G_\Lambda(N_k^{(H)},\delta)\le \varepsilon_r/8,
\]
and the same conclusion follows. Coverage is inherited from Lemma~\ref{lem:ci}.
\end{proof}

\begin{lemma}[Phase elimination for RDFE]
\label{lem:app-rdfe-elim}
Suppose phase $r$ is completed and the event $\mathcal{E}_\Lambda$ holds. If $\Delta_k\ge \varepsilon_r$, then arm $k$ is eliminated by the end of phase $r$.
\end{lemma}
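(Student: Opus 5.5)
The argument combines the phase-end width bound of Lemma~\ref{lem:app-rdfe-phase-width} with the fact that the optimal arm is never eliminated. Fix a completed phase $r$ and work on $\mathcal{E}_\Lambda$.

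First we show $k^\star$ survives every completed phase, so it is available as a comparator. For any $j\in\mathcal{S}_r$, Lemma~\ref{lem:app-rdfe-phase-width} gives $\mathrm{LCB}_{j,r}\le \mu_j^{(H)}\le\mu^\star\le \mathrm{UCB}_{k^\star,r}$. In particular $\mathrm{LCB}_{\hat k_r,r}\le \mathrm{UCB}_{k^\star,r}$, hence $\mathrm{UCB}_{k^\star,r}\ge \mathrm{LCB}_{\hat k_r,r}-\varepsilon_r/4$ and $k^\star\in\mathcal{S}_{r+1}$. Since $k^\star\in\mathcal{S}_1=[K]$, induction over completed phases gives $k^\star\in\mathcal{S}_r$ for every completed phase $r$.

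Next, take a suboptimal arm $k\in\mathcal{S}_r$ with $\Delta_k\ge\varepsilon_r$. If $k\notin\mathcal{S}_r$ it was already eliminated and there is nothing to prove. Otherwise Lemma~\ref{lem:app-rdfe-phase-width} gives the upper estimate
\[
\mathrm{UCB}_{k,r}\le \mu_k^{(H)}+\varepsilon_r/4=\mu^\star-\Delta_k+\varepsilon_r/4\le \mu^\star-3\varepsilon_r/4 .
\]
For the lower side, $\hat k_r$ maximizes the LCB over $\mathcal{S}_r$ and $k^\star\in\mathcal{S}_r$, so
\[
\mathrm{LCB}_{\hat k_r,r}\ge \mathrm{LCB}_{k^\star,r}\ge \mu^\star-\varepsilon_r/4 .
\]

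Combining the two displays,
\[
\mathrm{UCB}_{k,r}\le \mu^\star-3\varepsilon_r/4<\mu^\star-\varepsilon_r/2\le \mathrm{LCB}_{\hat k_r,r}-\varepsilon_r/4 .
\]
The middle inequality is strict because $\varepsilon_r>0$. Hence $k$ fails the retention test $\mathrm{UCB}_{k,r}\ge \mathrm{LCB}_{\hat k_r,r}-\varepsilon_r/4$ and is excluded from $\mathcal{S}_{r+1}$.

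The only delicate point is the survival of $k^\star$. It relies on phase $r$ being completed, so that every arm in $\mathcal{S}_r$, including $k^\star$, has an interval produced by the stopping rule, and on coverage holding for whichever fidelity was chosen per arm. Both are supplied by Lemma~\ref{lem:ci} through Lemma~\ref{lem:app-rdfe-phase-width}, so no further obstacle is expected.
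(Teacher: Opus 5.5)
Your proof is correct and follows the paper's argument, namely the chain $\mathrm{UCB}_{k,r}\le\mu^\star-3\varepsilon_r/4<\mu^\star-\varepsilon_r/2\le\mathrm{LCB}_{\hat k_r,r}-\varepsilon_r/4$ obtained from Lemma~\ref{lem:app-rdfe-phase-width}. Your induction showing that $k^\star$ is never eliminated fills in a step the paper leaves unstated: its proof compares $\mathrm{LCB}_{\hat k_r,r}$ with $\mathrm{LCB}_{k^\star,r}$, which requires $k^\star\in\mathcal{S}_r$, but it does not justify this.
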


\begin{proof}
By Lemma~\ref{lem:app-rdfe-phase-width},
\[
\mathrm{UCB}_{k,r}\le \mu_k^{(H)}+\varepsilon_r/4.
\]
For the optimal arm $k^\star$,
\[
\mathrm{LCB}_{k^\star,r}\ge \mu^\star-\varepsilon_r/4.
\]
Since $\hat{k}_r$ maximizes the lower confidence bound over $\mathcal{S}_r$,
\[
\mathrm{LCB}_{\hat{k}_r,r}\ge \mathrm{LCB}_{k^\star,r}\ge \mu^\star-\varepsilon_r/4,
\]
and hence
\[
\mathrm{LCB}_{\hat{k}_r,r}-\varepsilon_r/4 \ge \mu^\star-\varepsilon_r/2.
\]
If $\Delta_k=\mu^\star-\mu_k^{(H)}\ge \varepsilon_r$, then
\[
\mathrm{UCB}_{k,r}
\le
\mu_k^{(H)}+\varepsilon_r/4
=
\mu^\star-\Delta_k+\varepsilon_r/4
\le
\mu^\star-\frac{3\varepsilon_r}{4}
<
\mu^\star-\frac{\varepsilon_r}{2}
\le
\mathrm{LCB}_{\hat{k}_r,r}-\varepsilon_r/4.
\]
Thus arm $k$ fails the survival test and is eliminated.
\end{proof}

\begin{lemma}[Single-phase additional cost upper bound for RDFE]
\label{lem:app-rdfe-phase-cost}
Fix a completed phase $r\le R_\Lambda$ and an arm $k\in\mathcal{S}_r$. Then
\begin{equation}
\Delta\Lambda_{k,r}
\le
\mathsf{c}_k^\star(\varepsilon_r;\Lambda).
\end{equation}
\end{lemma}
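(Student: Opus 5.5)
The bound follows from the phase-wise stopping rule combined with monotonicity of the counts and of the radii. Fix a completed phase $r$ and an arm $k\in\mathcal{S}_r$. Write $n_L^\star:=\min\{n\ge1: G_\Lambda(n,\delta)+B_k(n)\le\varepsilon_r/8\}$ and $n_H^\star:=\min\{n\ge1:G_\Lambda(n,\delta)\le\varepsilon_r/8\}$. With these, $\mathsf{c}_k^{(L)}(\varepsilon_r;\Lambda)=\lambda^{(L)}n_L^\star$ and $\mathsf{c}^{(H)}(\varepsilon_r;\Lambda)=\lambda^{(H)}n_H^\star$. The algorithm selects $m_{k,r}$ as an argmin of these two costs, so it suffices to show that the cost spent in phase $r$ at the chosen fidelity is at most the corresponding certification cost.

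If both costs are infinite, the phase cannot complete: the stopping condition is never reached before the budget runs out, and the algorithm returns. So in a completed phase the minimum is finite and equals the cost of the chosen fidelity. Consider the case $m_{k,r}=L$; the case $H$ is identical with $B_k$ removed.

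\begin{itemize}
\item The map $n\mapsto G_\Lambda(n,\delta)$ is nonincreasing.
\item The map $n\mapsto B_k(n)=\sup_{s\ge n}U_k(s)/s$ is nonincreasing, since it is a supremum over a shrinking set.
\item Hence $n\mapsto G_\Lambda(n,\delta)+B_k(n)$ is nonincreasing, so the set $\{n: G_\Lambda(n,\delta)+B_k(n)\le\varepsilon_r/8\}$ is an up-set $\{n\ge n_L^\star\}$.
\end{itemize}

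Let $N_0$ be the low-fidelity count of arm $k$ at the start of phase $r$. The algorithm queries one sample at a time and stops at the first count $N$ satisfying the condition.

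\begin{itemize}
\item If $N_0\ge n_L^\star$, then no queries are made and $\Delta\Lambda_{k,r}=0$.
\item Otherwise the stopping count is exactly $n_L^\star$, so $\Delta\Lambda_{k,r}=\lambda^{(L)}(n_L^\star-N_0)\le\lambda^{(L)}n_L^\star$.
\end{itemize}

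Both cases give $\Delta\Lambda_{k,r}\le\mathsf{c}_k^{(L)}(\varepsilon_r;\Lambda)=\mathsf{c}_k^\star(\varepsilon_r;\Lambda)$.

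The step that needs the most care is the claim that only one fidelity is queried for arm $k$ within phase $r$, so that no cost from the other fidelity enters $\Delta\Lambda_{k,r}$. This holds by the algorithm's structure: the loop body uses only $m_{k,r}$. Warm-start queries belong to initialization, not to phase $r$.

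A smaller point is the monotonicity of $B_k$. It comes directly from its definition as a supremum over $s\ge n$ and needs neither Assumption~\ref{ass:mono-cert} nor the monotonicity of $U_k$. This is what makes the stopping count coincide with the oracle count $n_L^\star$ rather than merely exceed it. Since the oracle cost is defined as a minimum, the bound holds whichever argmin tie-break is used.
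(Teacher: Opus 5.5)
Your proof is correct and follows the same route as the paper: the stopping rule bounds the additional phase-$r$ cost at the chosen fidelity by the from-scratch certification cost, and the argmin choice of $m_{k,r}$ turns this into $\mathsf{c}_k^\star(\varepsilon_r;\Lambda)$. You also make explicit that $n\mapsto G_\Lambda(n,\delta)+B_k(n)$ is nonincreasing, because $B_k$ is a supremum over $s\ge n$; the paper's terse step ``at most the from-scratch cost'' silently relies on this, so it is a welcome clarification rather than a different argument.
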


\begin{proof}
If phase $r$ chooses low fidelity, the algorithm continues querying until
\[
G_\Lambda(N_k^{(L)},\delta)+B_k(N_k^{(L)})\le \varepsilon_r/8.
\]
Hence the additional low-fidelity cost incurred during phase $r$ is at most the
from-scratch low-fidelity certification cost
$\mathsf{c}_k^{(L)}(\varepsilon_r;\Lambda)$. If phase $r$ chooses high fidelity, the same
argument gives an upper bound by $\mathsf{c}^{(H)}(\varepsilon_r;\Lambda)$. Since the phase
chooses the cheaper of the two fidelities, the additional phase cost is at most
$\mathsf{c}_k^\star(\varepsilon_r;\Lambda)$.
\end{proof}

\begin{assumption}[Dyadic regularity of oracle costs for the reference scheme]
\label{ass:app-rdfe-dyadic}
For each suboptimal arm $k\neq k^\star$, there exists a constant $C_{\mathrm{dyad}}>0$ such that
\begin{equation}
\sum_{r\le r_k}\mathsf{c}_k^\star(\varepsilon_r;\Lambda)
\le
C_{\mathrm{dyad}}\,\mathsf{c}_k^\star(\varepsilon_{r_k};\Lambda),
\qquad
r_k:=\min\{r:\varepsilon_r\le \Delta_k/2\}.
\end{equation}
\end{assumption}

\begin{remark}
Assumption~\ref{ass:app-rdfe-dyadic} is a dyadic summability condition on the oracle
certification cost. It holds whenever $\mathsf{c}_k^\star(\varepsilon;\Lambda)$ grows
regularly as $\varepsilon$ decreases, for example under the polynomial-type oracle costs
induced by the high-fidelity term
$\mathsf{c}^{(H)}(\varepsilon;\Lambda)\asymp \lambda^{(H)}\log(T_\Lambda)/\varepsilon^2$
and by the common power-mismatch bound regime $B_k(n)\lesssim \Gamma_k n^{-\alpha}$. Its role is
to exclude highly oscillatory oracle-cost profiles across dyadic resolutions, so that the
cumulative phase cost is controlled by the terminal certification scale.
\end{remark}

\begin{theorem}[Regret guarantee for RDFE]
\label{thm:app-rdfe-regret}
Under Assumption~\ref{ass:plugin} and
Assumption~\ref{ass:app-rdfe-dyadic}, the reference scheme
Algorithm~\ref{alg:app-rdfe} satisfies, with probability at least $1-\delta$,
\begin{equation}
R_{\mathrm{RDFE}}(\Lambda)
\le
C_1\sum_{k\neq k^\star}\Delta_k\,\mathsf{c}_k^\star(\Delta_k/4;\Lambda)
+
C_2K(\lambda^{(L)}+\lambda^{(H)}),
\end{equation}
for some absolute constants $C_1,C_2>0$.
\end{theorem}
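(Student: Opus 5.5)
We work throughout on the event $\mathcal{E}_\Lambda$ of Lemma~\ref{lem:ci}, which holds with probability at least $1-\delta$. On this event both fidelity-specific intervals cover $\mu_k^{(H)}$ for every arm, and the interval for the optimal arm $k^\star$ is never empty. We then split the regret into a warm-start part and a phase part. The warm-start queries cost $K(\lambda^{(L)}+\lambda^{(H)})$ in total, and each carries regret at most $\Delta_k\le\max_j\Delta_j$. Assuming bounded means so that the gaps are $O(1)$, this part is absorbed into $C_2K(\lambda^{(L)}+\lambda^{(H)})$. Any final partial phase stops as soon as the budget cannot pay for the next query, so its cost is charged to the same phase-cost bound below. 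The cost-weighted regret of the phase part is $\sum_{k\ne k^\star}\Delta_k\sum_{r}\Delta\Lambda_{k,r}$.

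Next we fix a suboptimal arm $k$ and bound the last phase in which it can be active. We first check that $k^\star$ is never eliminated. By Lemma~\ref{lem:app-rdfe-phase-width}, $\mathrm{UCB}_{k^\star,r}\ge\mu^\star$. Also $\mathrm{LCB}_{\hat k_r,r}\le\mu_{\hat k_r}^{(H)}\le\mu^\star$. Hence $\mathrm{UCB}_{k^\star,r}\ge\mathrm{LCB}_{\hat k_r,r}-\varepsilon_r/4$, so $k^\star$ passes the survival test. Now let $r_k=\min\{r:\varepsilon_r\le\Delta_k/2\}$. Then $\varepsilon_{r_k}\le\Delta_k$, so Lemma~\ref{lem:app-rdfe-elim} eliminates $k$ by the end of phase $r_k$ whenever that phase is completed. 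Therefore $\Delta\Lambda_{k,r}=0$ for all $r>r_k$. If phase $r_k$ is not completed, the algorithm terminates during or before it, and the same truncation holds.

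For the per-phase cost we apply Lemma~\ref{lem:app-rdfe-phase-cost} to every phase $r\le r_k$, which gives $\Delta\Lambda_{k,r}\le\mathsf{c}_k^\star(\varepsilon_r;\Lambda)$. The same argument covers an interrupted phase, because its spending is at most the amount needed to reach the stopping condition. Summing over phases and applying Assumption~\ref{ass:app-rdfe-dyadic} gives
\[
\sum_{r\le r_k}\Delta\Lambda_{k,r}\le C_{\mathrm{dyad}}\,\mathsf{c}_k^\star(\varepsilon_{r_k};\Lambda).
\]
By minimality of $r_k$ we have $\varepsilon_{r_k-1}>\Delta_k/2$, so $\varepsilon_{r_k}>\Delta_k/4$. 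The costs $\mathsf{c}^{(L)}_k$ and $\mathsf{c}^{(H)}$ are monotone nonincreasing in $\varepsilon$, since a larger target means an earlier stopping time. It follows that $\mathsf{c}_k^\star(\varepsilon_{r_k};\Lambda)\le\mathsf{c}_k^\star(\Delta_k/4;\Lambda)$. Multiplying by $\Delta_k$ and summing over $k$ gives the claimed bound with $C_1=C_{\mathrm{dyad}}$.

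The main obstacle is Lemma~\ref{lem:app-rdfe-phase-cost} under reuse of samples across phases. The phase-$r$ stopping rule is applied to cumulative counts, so the additional cost in phase $r$ is at most the from-scratch cost of that fidelity, provided the fidelity choice does not change between phases. The potential problem is that if phases alternate fidelities, one fidelity's count is not reused by the other. The per-phase bound still holds because each phase pays at most the from-scratch cost of the cheaper fidelity, even when all of its samples are new. For low fidelity, monotonicity of the stopping set requires $G_\Lambda(n)+B_k(n)$ to be nonincreasing in $n$. This holds because $B_k(n)=\sup_{s\ge n}U_k(s)/s$ is nonincreasing by construction. With that, the bound $\Delta\Lambda_{k,r}\le\mathsf{c}_k^\star(\varepsilon_r;\Lambda)$ goes through, and the summation above completes the proof.
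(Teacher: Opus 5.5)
Your proposal is correct and follows essentially the same route as the paper: eliminate arm $k$ by phase $r_k$ via Lemma~\ref{lem:app-rdfe-elim}, charge each phase to $\mathsf{c}_k^\star(\varepsilon_r;\Lambda)$ via Lemma~\ref{lem:app-rdfe-phase-cost}, sum with Assumption~\ref{ass:app-rdfe-dyadic}, weight by $\Delta_k$, and add the warm-start term. You also make explicit several steps the paper leaves implicit: survival of $k^\star$, monotonicity of $G_\Lambda+B_k$ and of $\mathsf{c}_k^\star$ (which gives $\mathsf{c}_k^\star(\varepsilon_{r_k};\Lambda)\le\mathsf{c}_k^\star(\Delta_k/4;\Lambda)$), and the interrupted final phase; one small caveat is that when $r_k=1$ the inequality $\varepsilon_{r_k}>\Delta_k/4$ needs $\Delta_k<2$, which your bounded-means assumption (e.g.\ means in $[0,1]$) supplies and which the paper also tacitly uses.
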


\begin{proof}
Fix a suboptimal arm $k\neq k^\star$, and let
\[
r_k:=\min\{r:\varepsilon_r\le \Delta_k/2\}.
\]
By Lemma~\ref{lem:app-rdfe-elim}, arm $k$ cannot survive beyond phase $r_k$. Therefore its
total additional phase-based exploration cost is bounded by
\[
\sum_{r\le r_k}\Delta\Lambda_{k,r}.
\]
Applying Lemma~\ref{lem:app-rdfe-phase-cost} phase by phase gives
\[
\sum_{r\le r_k}\Delta\Lambda_{k,r}
\le
\sum_{r\le r_k}\mathsf{c}_k^\star(\varepsilon_r;\Lambda).
\]
Assumption~\ref{ass:app-rdfe-dyadic} then implies
\[
\sum_{r\le r_k}\mathsf{c}_k^\star(\varepsilon_r;\Lambda)
\le
C\,\mathsf{c}_k^\star(\Delta_k/4;\Lambda).
\]
Since every unit of cost spent on arm $k$ contributes at most $\Delta_k$ to the
cost-weighted pseudo-regret,
\[
R_k(\Lambda)
\le
C\,\Delta_k\,\mathsf{c}_k^\star(\Delta_k/4;\Lambda).
\]
Summing over all suboptimal arms yields the main term. The warm start contributes at most
one low-fidelity and one high-fidelity pull per arm, which gives the additive lower-order
term.
\end{proof}

\section{Additional Experimental Details}
\label{app:llm-additional}

This appendix gives the experimental details behind Section~\ref{sec:exp}. Appendix~\ref{app:synthetic-sensitivity} studies how the decay rate of the low-fidelity mismatch bound affects TACC. Appendix~\ref{app:llm-task-protocol} specifies the LLM-as-a-judge task, prompt-policy arms, weak-judge model, and baseline conventions. Appendix~\ref{app:llm-residual-full} reports the residual-mismatch regime for multiple high-fidelity costs. Appendix~\ref{app:llm-pure-decay} studies a vanishing-mismatch regime in which the low--high gap eventually disappears. Appendix~\ref{app:llm-empirical-gap} reports empirical weak-judge accuracy and low--high gap measurements from trained checkpoints, and Appendix~\ref{app:llm-empirical-bandit} uses those checkpoints in a bandit experiment.

\subsection{Sensitivity to the Low-Fidelity Improvement Rate}
\label{app:synthetic-sensitivity}

Using Bandit Set A, we vary the convergence rate $r$ in the mismatch bound $B_k(n)=O(n^{-r})$. A larger $r$ means that the low-fidelity source improves faster. Figure~\ref{fig:synthetic-sensitive-r} shows that TACC benefits from faster decay: the low-fidelity interval tightens more quickly, so more suboptimal arms can be certified through cheap continuation rather than high-fidelity escalation. This behavior is consistent with the mechanism analyzed in Section~\ref{sec:theory}.

\begin{figure}[t]
    \centering
    \includegraphics[width=0.78\linewidth]{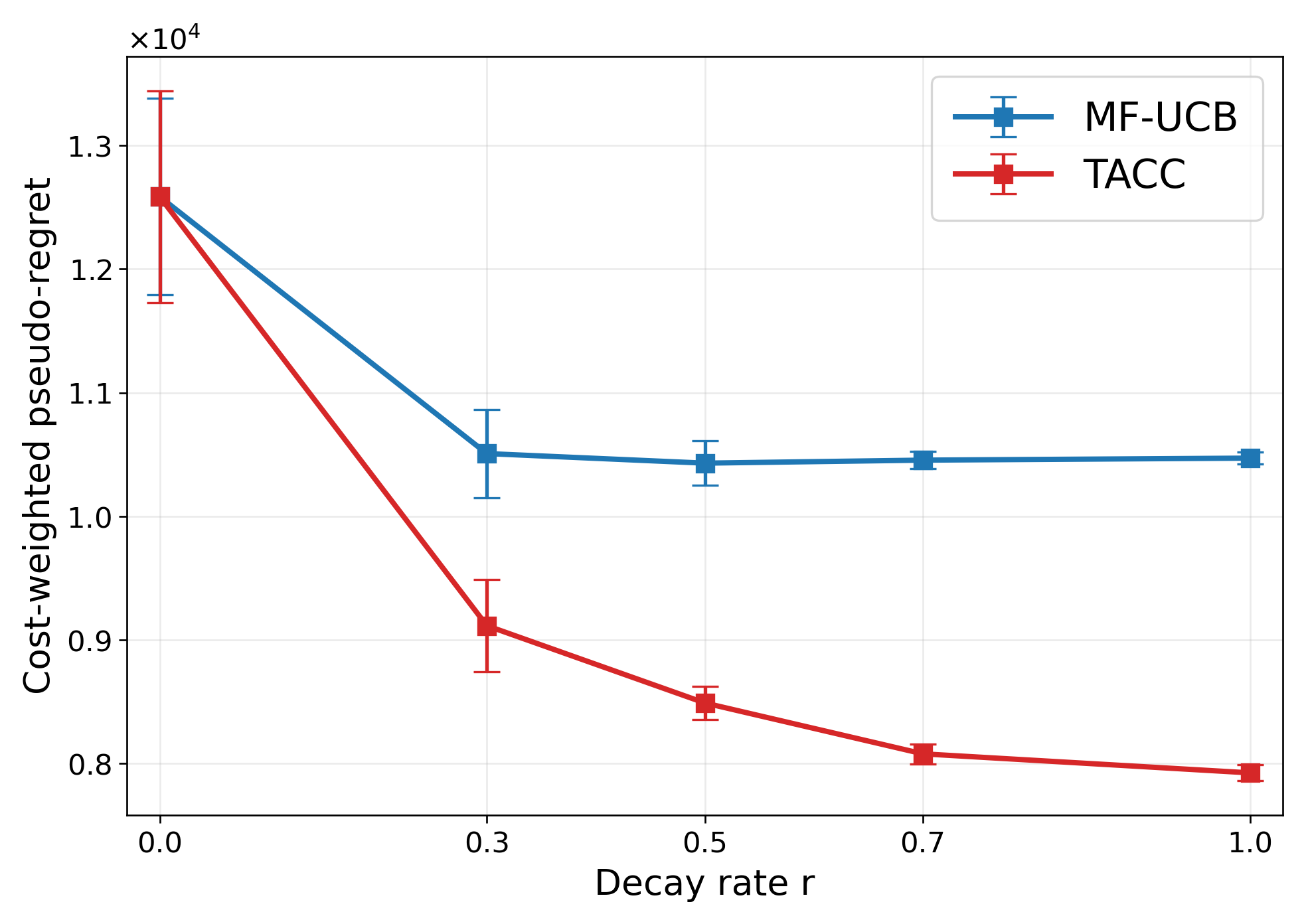}
    \caption{Sensitivity of TACC to the decay rate $r$ in the low-fidelity mismatch bound $B_k(n)=O(n^{-r})$. Larger $r$ corresponds to faster low-fidelity improvement.}
    \label{fig:synthetic-sensitive-r}
\end{figure}

\subsection{LLM-as-a-Judge Experimental Protocol}
\label{app:llm-task-protocol}

The LLM-as-a-judge experiment treats a bandit arm as a prompt-policy rather than as an individual NLI example. A prompt-policy maps an NLI input to a candidate label. The high-fidelity target is the expected verifier correctness of that policy, while the low-fidelity source is a cheaper weak-judge feedback process associated with the same policy.

\paragraph{Task and verifier.}
Each evaluation item consists of a premise $p$, a hypothesis $h$, and a verifier label $y\in\{\textsc{entailment},\textsc{neutral},\textsc{contradiction}\}$. A prompt-policy arm $k$ takes $(p,h)$ as input and returns a predicted label $\hat{y}_k(p,h)$. The high-fidelity verifier reward is
\begin{equation}
Y_k^{(H)}
=
\mathbf{1}\{\hat{y}_k(p,h)=y\},
\qquad
\mu_k^{(H)}
=
\mathbb{E}\!\left[Y_k^{(H)}\right],
\label{eq:app-llm-high-verifier}
\end{equation}
where the expectation is over the shared evaluation pool. In the bandit simulation, a high-fidelity query samples an evaluation item and observes verifier correctness for the selected prompt-policy. Regret is always computed with respect to the high-fidelity means $\mu_k^{(H)}$.

\paragraph{Prompt-policy arms.}
The prompt-policy pool includes \texttt{direct\_label}, \texttt{cot\_reasoning}, \texttt{role\_nli\_expert}, \texttt{self\_check}, \texttt{contrastive\_check}, \texttt{lexical\_overlap}, \texttt{hypothesis\_only}, and \texttt{neutral\_prior}. The main residual-mismatch experiment uses the four-arm subset \texttt{lexical\_overlap}, \texttt{hypothesis\_only}, \texttt{neutral\_prior}, and \texttt{contrastive\_check}; the checkpoint-based bandit experiment uses the five-arm subset specified in Appendix~\ref{app:llm-empirical-bandit}. Table~\ref{tab:app-llm-controlled-arms} summarizes the main four arms.

\begin{table}[t]
\centering
\caption{Controlled prompt-policy arms used in the main LLM-as-a-Judge experiment. All policies receive the same NLI item and return one label in $\{\textsc{entailment},\textsc{neutral},\textsc{contradiction}\}$.}
\label{tab:app-llm-controlled-arms}
\small
\setlength{\tabcolsep}{4pt}
{\renewcommand{\arraystretch}{1.08}
\begin{tabularx}{\linewidth}{>{\raggedright\arraybackslash\ttfamily}p{0.24\linewidth}X}
\toprule
Arm & Prompt-policy description \\
\midrule
lexical\_overlap
& Uses lexical overlap and containment between the premise and hypothesis; predicts entailment under high overlap or containment, contradiction under explicit negation cues, and neutral otherwise. \\
hypothesis\_only
& Ignores the premise and predicts from the hypothesis alone; this represents a spurious-prior policy induced by hypothesis-side artifacts. \\
neutral\_prior
& Favors the neutral label; this is a conservative weak distractor arm. \\
contrastive\_check
& Compares contradiction evidence and entailment evidence, returns the label with stronger support, and defaults to neutral when neither side is decisive. \\
\bottomrule
\end{tabularx}}
\end{table}

\begin{remark}
We use a four-arm benchmark in the main text to keep the main effect identifiable: the arms are fixed across all methods, so differences in regret come from fidelity allocation rather than from scaling to a large arm set. The inclusion of \texttt{hypothesis\_only} and lexical or prior-based policies is natural for NLI, where hypothesis-only baselines and annotation artifacts are known to create spurious predictive signals \citep{poliak2018hypothesis,gururangan2018annotation}.
\end{remark}

Other prompt-policies are used in the checkpoint-based diagnostic. \texttt{direct\_label} uses number mismatch, negation conflict, overlap, and containment; \texttt{cot\_reasoning} checks contradiction cues before entailment cues and then applies overlap thresholds; \texttt{role\_nli\_expert} is a more conservative expert-style rule; and \texttt{self\_check} takes a majority vote over several rule-based policies, preferring neutral on ties.

\paragraph{Controlled weak-judge model.}
For arm $k$, after its $n$-th low-fidelity query,
\begin{equation}
\mu_k^{(L)}(n)
=
\mu_k^{(H)}
+
b_k
+
a_k(n+n_0)^{-r}.
\label{eq:app-llm-controlled-weak-judge}
\end{equation}
Here $n$ is the arm-specific low-fidelity count. The transient term $a_k(n+n_0)^{-r}$ models improvement with repeated low-fidelity use. The residual term $b_k$ models persistent weak-judge bias relative to the verifier. In each seed, the low-fidelity process is fixed before running any bandit algorithm and is shared by all methods. The scale parameter $\zeta$ controls the initial low--high mismatch; in the residual-mismatch experiments, we set $b_k \equiv 0.05$ to enforce a persistent residual gap. When bounded observations are required, the resulting means are clipped to $[0,1]$ before sampling rewards.

The residual term $b_k$ is included because weak judges need not converge exactly to the target evaluator. MT-Bench and Chatbot Arena document judge biases such as position, verbosity, self-enhancement, and limited reasoning ability \citep{zheng2023judging}. Follow-up empirical studies also find that fine-tuned judges may lag stronger judges in generalizability, fairness, and adaptability \citep{huang2025empiricaljudge}. These findings motivate a model in which calibration improves the weak judge but need not eliminate the low--high gap.

The power-law transient in \eqref{eq:app-llm-controlled-weak-judge} is a modeling choice rather than an empirical claim about all judges. It gives a simple way to interpolate between an initially biased weak judge and a better but still imperfect judge.

\paragraph{Checkpoint weak judges.}
The empirical-gap experiments replace \eqref{eq:app-llm-controlled-weak-judge} with trained weak-judge checkpoints. The weak judge is \texttt{cross-encoder/nli-deberta-v3-small}. We use cumulative continued fine-tuning at checkpoint sizes $q\in\{32,64,128,256,512,1024,2048,4096,8192\}$, with maximum length $512$, $2$ stage epochs, learning rate $2\times 10^{-6}$, batch size $8$, evaluation batch size $32$, and seed $0$. Policy names are excluded from the weak-judge input to avoid policy-name leakage. The weak judge sees the task, premise, hypothesis, and candidate answer, and predicts whether the candidate answer matches the gold NLI relation.

For each checkpoint size $q$ and arm $k$, we estimate an empirical low-fidelity mean $\widehat{\mu}_{k,q}^{(L)}$ from weak-judge scores on the shared evaluation pool, and compare it with the verifier mean $\mu_k^{(H)}$. This gives a finite checkpoint-based low--high gap profile.

\paragraph{Baselines.}
All algorithms are evaluated on the same high-fidelity means, low-fidelity process, budget sequence, and random seeds when paired comparisons are reported. \textsc{UCB} uses only verifier feedback. \textsc{MF-UCB} uses a fixed multi-fidelity switching rule. \textsc{DNC} is the direct no-continuation ablation of TACC: it uses the same dynamic mismatch bound based confidence bounds and the same threshold $\gamma$, but after the low-fidelity radius crosses $\gamma$, it switches to high fidelity without checking whether additional low-fidelity samples would further reduce $B_k(n)$. \textsc{Weak-Fixed} uses only the initial weak judge. \textsc{Low-Improving} always uses the improving low-fidelity source and never falls back to the verifier; it is included as a diagnostic baseline.

\subsection{Residual-Mismatch Regime}
\label{app:llm-residual-full}

The main text reports the representative residual-mismatch experiment with $\lambda^{(H)}=500$. Here we give the full results for $\lambda^{(H)}\in\{200,500,1000\}$. We use the four prompt-policy arms from Table~\ref{tab:app-llm-controlled-arms}, $\lambda^{(L)}=1$, $r=0.75$, $\zeta=0.4$, $b_k \equiv 0.05$, $\gamma=0.025$, $\eta=10^{-4}$, and $S_0=128$. Budgets range up to $\Lambda=128000$, and results are averaged over $200$ seeds.

Table~\ref{tab:app-llm-residual-summary} reports final-budget regret for all methods and all three high-fidelity costs. Figure~\ref{fig:app-llm-residual-curves} shows the corresponding six-method regret curves. TACC remains competitive across all three high-fidelity costs. The residual mismatch makes \textsc{Low-Improving} unreliable, while the improvement of the weak judge still leaves room for bounded continuation to reduce costly high-fidelity confirmations.

\begin{table*}[t]
\centering
\caption{
Residual-mismatch experiment at final budget $\Lambda=128000$. Values are mean
cost-weighted pseudo-regret $\pm$ standard error over $200$ seeds. Lower is
better; the best method for each high-fidelity cost is bolded.
}
\label{tab:app-llm-residual-summary}
\small
\setlength{\tabcolsep}{4pt}
\begin{tabular}{c c c c c c c}
\toprule
$\lambda^{(H)}$ & TACC & DNC & MF-UCB & UCB & Weak-Fixed & Low-Improving \\
\midrule
$200$
& $\mathbf{3277.7 \pm 243.2}$
& $3894.0 \pm 266.6$
& $4317.0 \pm 276.8$
& $4264.1 \pm 289.1$
& $7331.9 \pm 119.3$
& $6827.5 \pm 85.7$ \\
$500$
& $\mathbf{4023.4 \pm 247.3}$
& $5201.0 \pm 289.7$
& $5359.1 \pm 281.8$
& $5083.2 \pm 286.8$
& $7501.6 \pm 127.7$
& $6808.4 \pm 83.4$ \\
$1000$
& $\mathbf{4688.6 \pm 253.6}$
& $5532.7 \pm 277.0$
& $5334.4 \pm 251.1$
& $5190.7 \pm 253.9$
& $7058.4 \pm 101.2$
& $6668.6 \pm 50.9$ \\
\bottomrule
\end{tabular}
\end{table*}

For the representative cost $\lambda^{(H)}=500$, Table~\ref{tab:app-llm-residual-paired} gives paired final-budget comparisons. The TACC-minus-baseline intervals are below zero for all primary baselines, including \textsc{DNC}; this isolates the contribution of the continuation block in the residual-mismatch setting.

\begin{table}[t]
\centering
\caption{Paired final-budget comparisons for the residual-mismatch experiment with $\lambda^{(H)}=500$. The reported value is TACC minus baseline; negative values favor TACC. Confidence intervals are paired $95\%$ intervals over $200$ common seeds.}
\label{tab:app-llm-residual-paired}
\small
\setlength{\tabcolsep}{5pt}
\begin{tabular}{lccc}
\toprule
Baseline & Mean diff. & $95\%$ CI & Favors TACC? \\
\midrule
DNC & $-561.1$ & $[-1103.6,\,-18.6]$ & Yes \\
MF-UCB & $-631.4$ & $[-1208.5,\,-54.4]$ & Yes \\
UCB & $-619.4$ & $[-1227.2,\,-11.6]$ & Yes \\
Weak-Fixed & $-2865.0$ & $[-3318.3,\,-2411.6]$ & Yes \\
\bottomrule
\end{tabular}
\end{table}

Figure~\ref{fig:app-llm-residual-curves} gives the full regret curves for all six methods and all three high-fidelity costs.

\begin{figure*}[t]
    \centering
    \begin{subfigure}[t]{0.48\textwidth}
        \centering
        \includegraphics[width=\linewidth]{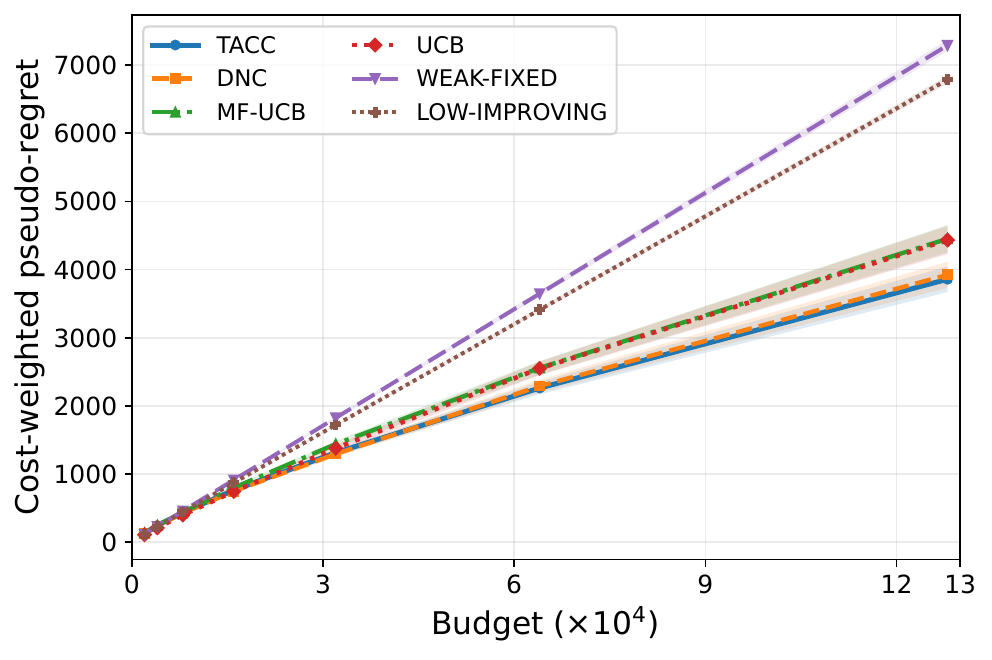}
        \caption{$\lambda^{(H)}=200$}
    \end{subfigure}
    \hfill
    \begin{subfigure}[t]{0.48\textwidth}
        \centering
        \includegraphics[width=\linewidth]{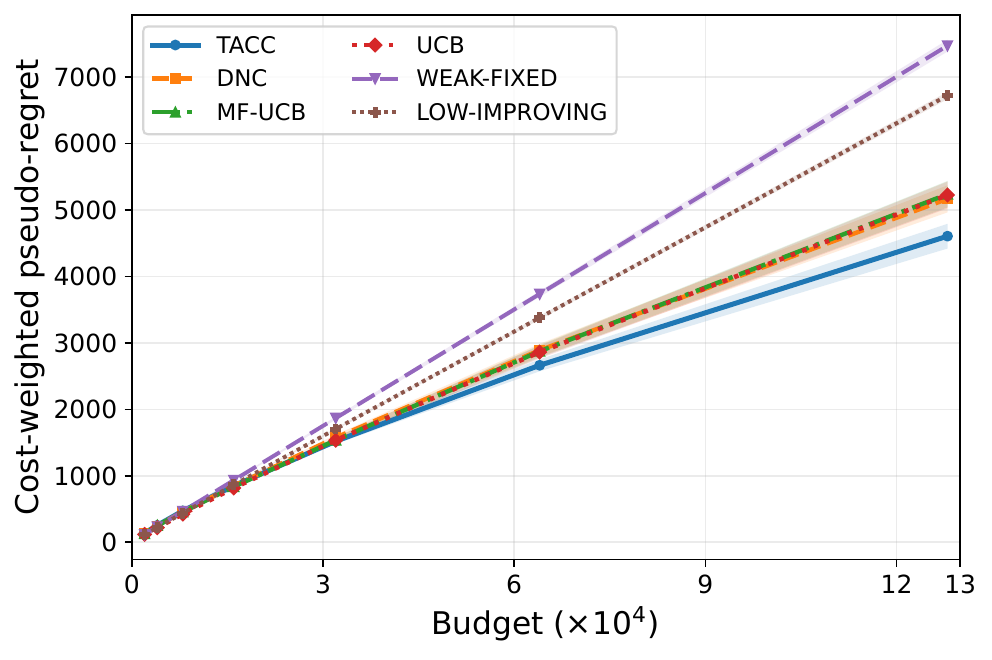}
        \caption{$\lambda^{(H)}=500$}
    \end{subfigure}

    \vspace{0.8em}

    \begin{subfigure}[t]{0.48\textwidth}
        \centering
        \includegraphics[width=\linewidth]{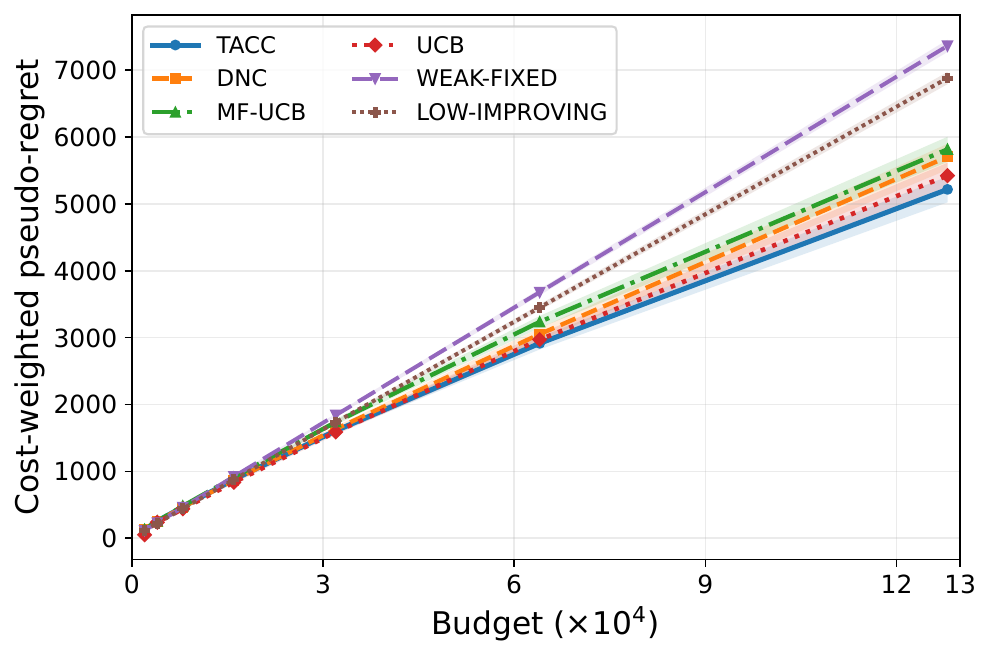}
        \caption{$\lambda^{(H)}=1000$}
    \end{subfigure}
    \caption{Residual-mismatch regret curves for $\lambda^{(H)}\in\{200,500,1000\}$. The weak judge improves with use but retains persistent mismatch from the verifier.}
    \label{fig:app-llm-residual-curves}
\end{figure*}

Figure~\ref{fig:app-llm-residual-continuation} reports the continuation counts for the representative $\lambda^{(H)}=500$ setting. The continuation budget remains bounded, consistent with the design of TACC as a post-threshold correction rather than a replacement for high-fidelity verification.

\begin{figure}[t]
    \centering
    \includegraphics[width=0.88\linewidth]{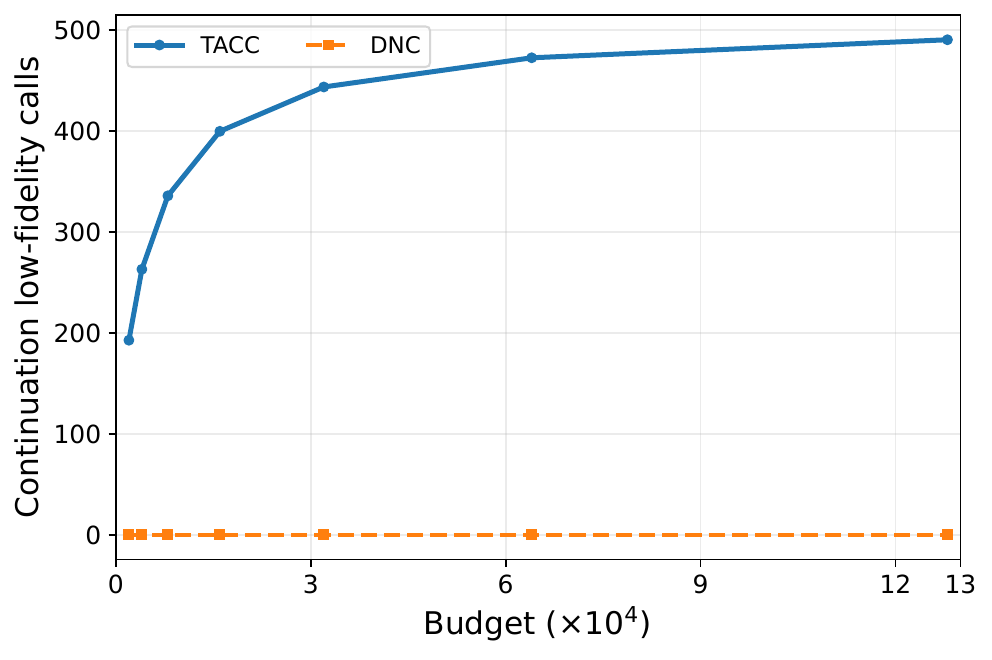}
    \caption{Continuation calls in the residual-mismatch experiment with $\lambda^{(H)}=500$.}
    \label{fig:app-llm-residual-continuation}
\end{figure}

\subsection{Vanishing-Mismatch Regime: \texorpdfstring{$B_k(n)\to 0$}{Bk(n) to 0}}
\label{app:llm-pure-decay}

We next remove the residual term and consider
\begin{equation}
    \mu_k^{(L)}(n)
    =
    \mu_k^{(H)}
    +
    a_k(n+n_0)^{-r}.
    \label{eq:app-llm-pure-decay-model}
\end{equation}
This is the regime where the low--high mismatch shrinks toward zero. In this case, low fidelity eventually becomes aligned with the high-fidelity target, so \textsc{Low-Improving} can become competitive at large budgets.

Table~\ref{tab:app-llm-pure-decay-summary} reports final-budget regret for the main methods and \textsc{Weak-Fixed}, and Figure~\ref{fig:app-llm-pure-decay-curves} shows the corresponding six-method regret curves. TACC has small final regret across all high-fidelity costs. We interpret this experiment as a theory-aligned diagnostic rather than the hardest practical LLM-judge regime: once the cheap source effectively converges to the target, a low-only improving strategy can also become competitive.

\begin{table*}[t]
\centering
\caption{
Vanishing-mismatch experiment at final budget $\Lambda=128000$. Values are mean
cost-weighted pseudo-regret $\pm$ standard error. Lower is better; the best
reported method for each high-fidelity cost is bolded.
}
\label{tab:app-llm-pure-decay-summary}
\small
\setlength{\tabcolsep}{4pt}
\begin{tabular}{c c c c c c}
\toprule
$\lambda^{(H)}$ & TACC & DNC & MF-UCB & UCB & Weak-Fixed \\
\midrule
$200$
& $\mathbf{75.7 \pm 0.5}$
& $677.5 \pm 28.6$
& $1503.1 \pm 37.1$
& $1506.1 \pm 39.4$
& $6503.0 \pm 0.0$ \\
$500$
& $\mathbf{143.0 \pm 0.3}$
& $657.7 \pm 113.4$
& $2687.8 \pm 74.0$
& $2711.1 \pm 52.0$
& $6503.0 \pm 0.0$ \\
$1000$
& $\mathbf{261.8 \pm 0.4}$
& $598.7 \pm 187.9$
& $3719.0 \pm 94.4$
& $3790.0 \pm 64.1$
& $6503.0 \pm 0.0$ \\
\bottomrule
\end{tabular}
\end{table*}


\begin{figure*}[t]
    \centering
    \begin{subfigure}[t]{0.48\textwidth}
        \centering
        \includegraphics[width=\linewidth]{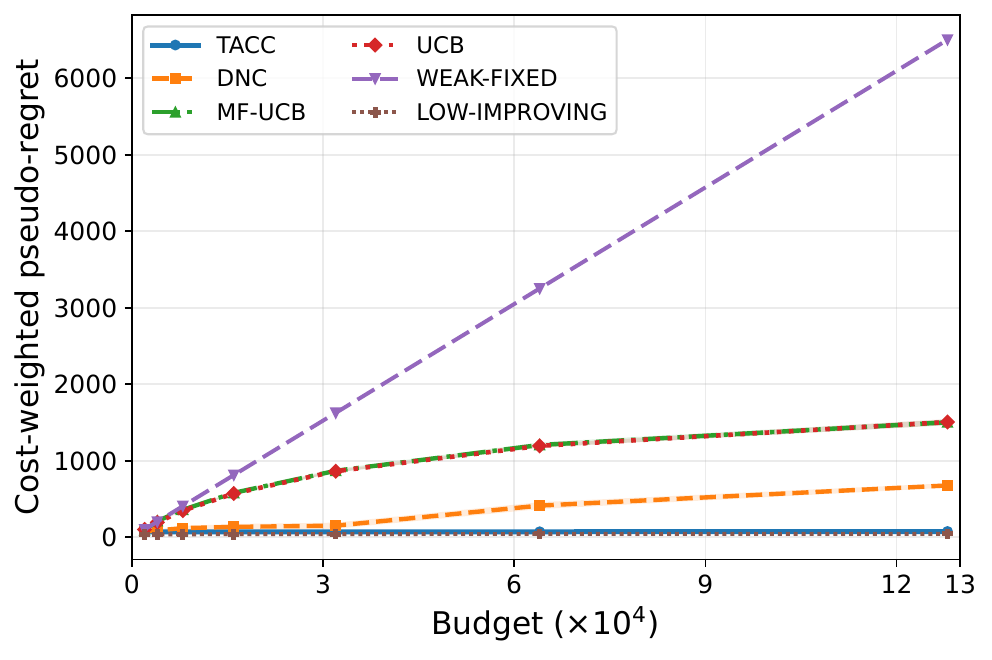}
        \caption{$\lambda^{(H)}=200$}
    \end{subfigure}
    \hfill
    \begin{subfigure}[t]{0.48\textwidth}
        \centering
        \includegraphics[width=\linewidth]{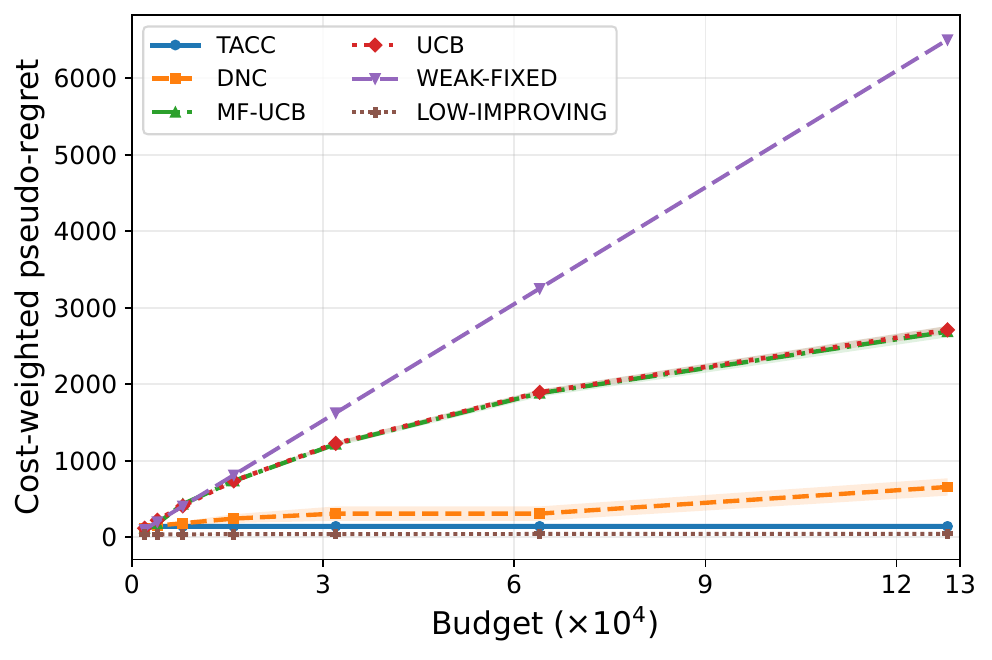}
        \caption{$\lambda^{(H)}=500$}
    \end{subfigure}

    \vspace{0.8em}

    \begin{subfigure}[t]{0.48\textwidth}
        \centering
        \includegraphics[width=\linewidth]{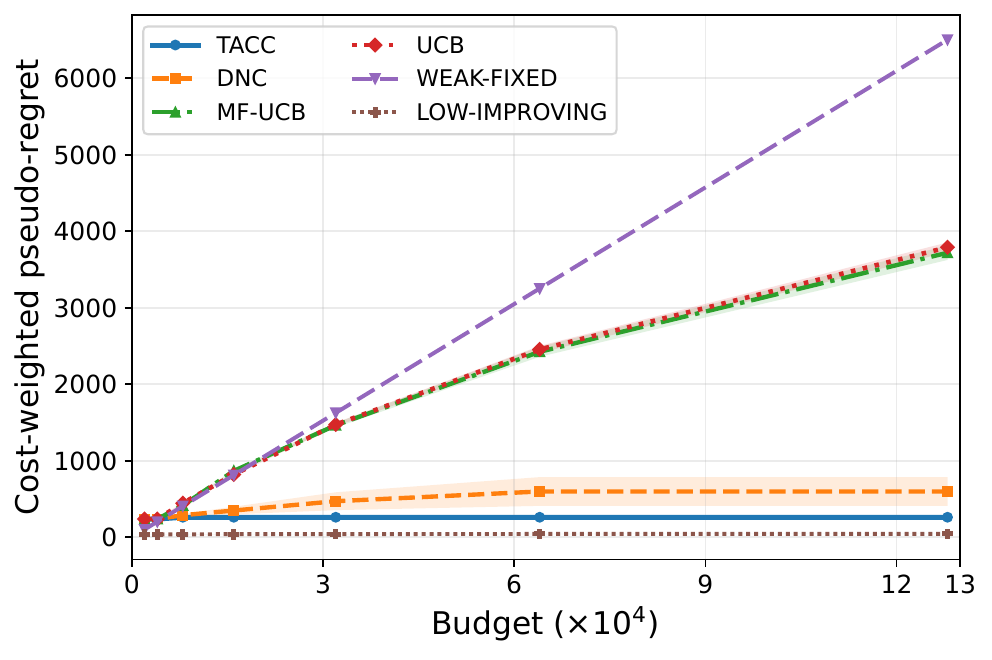}
        \caption{$\lambda^{(H)}=1000$}
    \end{subfigure}
    \caption{Vanishing-mismatch regret curves for $\lambda^{(H)}\in\{200,500,1000\}$. The low-fidelity source eventually aligns with the high-fidelity target.}
    \label{fig:app-llm-pure-decay-curves}
\end{figure*}

\subsection{Empirical Weak-Judge Gap}
\label{app:llm-empirical-gap}

The residual-mismatch and vanishing-mismatch experiments control the low-fidelity process analytically. We also evaluate a checkpoint-based construction. Rather than defining low fidelity by algebraically perturbing the high-fidelity mean, we train weak judges at increasing data scales and estimate the low--high gap from their policy-level behavior.

Figure~\ref{fig:app-weak-judge-accuracy} and Table~\ref{tab:app-weak-judge-accuracy} report weak-judge prediction accuracy across checkpoints. Accuracy increases from $0.4881$ at $q=32$ to $0.5669$ at $q=512$, $0.6344$ at $q=1024$, $0.7040$ at $q=2048$, and $0.7619$ at $q=8192$. The mean loss also decreases from $0.7892$ at $q=32$ to $0.6922$ at $q=512$, $0.6096$ at $q=2048$, and $0.5291$ at $q=8192$.

\begin{figure}[t]
    \centering
    \includegraphics[width=0.78\linewidth]{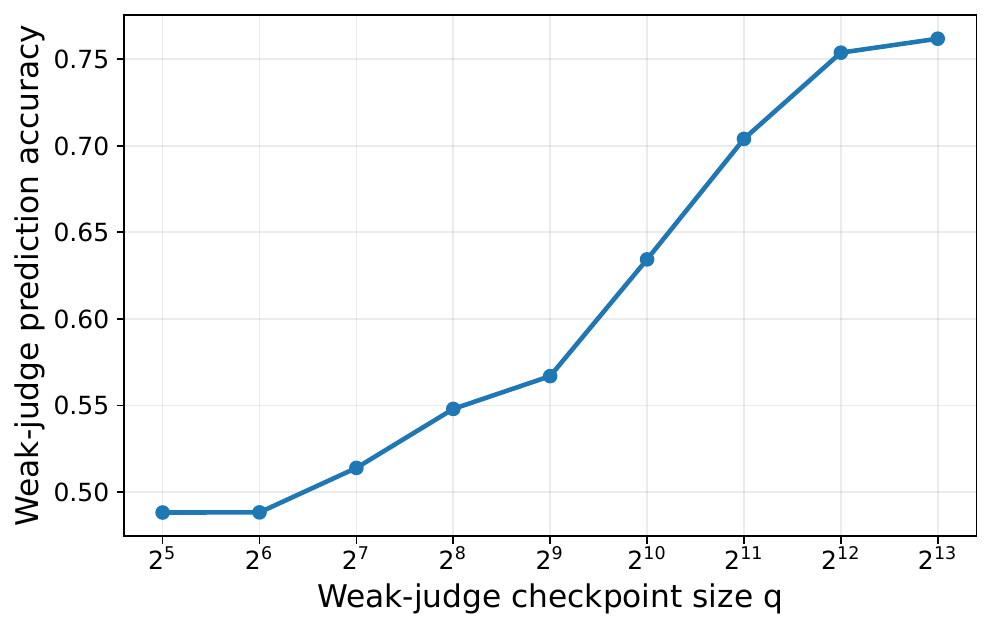}
    \caption{Weak-judge prediction accuracy as a function of checkpoint size $q$.}
    \label{fig:app-weak-judge-accuracy}
\end{figure}

\begin{table}[t]
\centering
\caption{Weak-judge prediction accuracy across checkpoint sizes.}
\label{tab:app-weak-judge-accuracy}
\small
\setlength{\tabcolsep}{6pt}
\begin{tabular}{cc}
\toprule
$q$ & Prediction accuracy \\
\midrule
$32$ & $0.4881$ \\
$64$ & $0.4882$ \\
$128$ & $0.5139$ \\
$256$ & $0.5480$ \\
$512$ & $0.5669$ \\
$1024$ & $0.6344$ \\
$2048$ & $0.7040$ \\
$4096$ & $0.7538$ \\
$8192$ & $0.7619$ \\
\bottomrule
\end{tabular}
\end{table}

For each prompt-policy arm and weak-judge scale $q$, we compute an empirical low-fidelity policy mean and compare it with the verifier mean. Figure~\ref{fig:app-low-high-gap} and Table~\ref{tab:app-llm-empirical-gap-diagnostics} report the resulting policy-level low--high gap. The mean absolute gap decreases from $0.1755$ at $q=32$ to $0.0452$ at $q=512$ and $0.0318$ at $q=1024$, with later checkpoints fluctuating at a lower level. These measurements support the modeling premise that cheap feedback can be initially unreliable but becomes more aligned with the high-fidelity verifier after additional calibration.

\begin{figure}[t]
    \centering
    \includegraphics[width=0.78\linewidth]{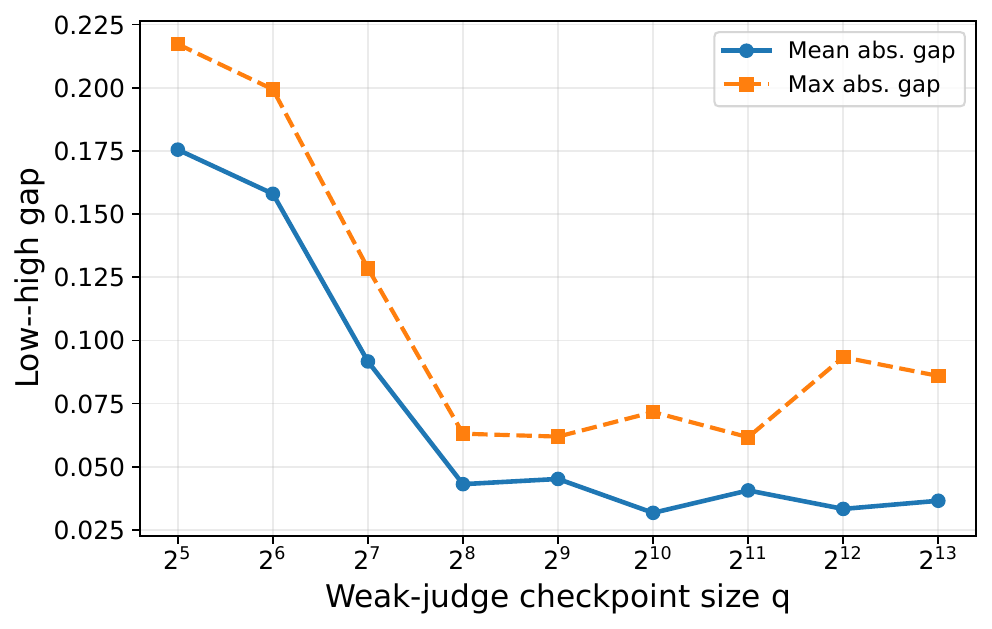}
    \caption{Policy-level low--high gap as a function of weak-judge checkpoint size $q$.}
    \label{fig:app-low-high-gap}
\end{figure}

\begin{table}[t]
\centering
\caption{Empirical weak-judge gap measurements on the evaluation split.}
\label{tab:app-llm-empirical-gap-diagnostics}
\small
\setlength{\tabcolsep}{5pt}
\begin{tabular}{ccc}
\toprule
$q$ & Mean abs. gap & Max abs. gap \\
\midrule
$32$ & $0.1755$ & $0.2172$ \\
$64$ & $0.1580$ & $0.1993$ \\
$128$ & $0.0917$ & $0.1286$ \\
$256$ & $0.0431$ & $0.0631$ \\
$512$ & $0.0452$ & $0.0620$ \\
$1024$ & $0.0318$ & $0.0717$ \\
$2048$ & $0.0407$ & $0.0617$ \\
$4096$ & $0.0333$ & $0.0934$ \\
$8192$ & $0.0366$ & $0.0860$ \\
\bottomrule
\end{tabular}
\end{table}

\subsection{Checkpoint-Based Bandit Experiment}
\label{app:llm-empirical-bandit}

Finally, we use the empirical weak-judge checkpoints from Appendix~\ref{app:llm-empirical-gap} inside the bandit simulation. This experiment is checkpoint-based rather than rate-aligned: the weak-judge process is finite, capped at $q_{\max}=2048$, and not exactly monotone. The regret curve should therefore be read as a finite-budget diagnostic, not as an asymptotic-rate demonstration.

We report a five-arm setting with \texttt{lexical\_overlap}, \texttt{direct\_label}, \texttt{self\_check}, \texttt{cot\_reasoning}, and \texttt{hypothesis\_only}. The high-fidelity means are $0.5568$, $0.5376$, $0.5376$, $0.5364$, and $0.5060$, respectively. We use $\lambda^{(H)}=500$, $\lambda^{(L)}=1$, $\gamma=0.03$, $\eta=10^{-4}$, $S_0=128$, $q_{\max}=2048$, budgets up to $\Lambda=256000$, and $120$ seeds.

Table~\ref{tab:app-llm-empirical-checkpoint} gives paired final-budget comparisons, and Figure~\ref{fig:app-llm-empirical-checkpoint} shows the corresponding regret curves. TACC significantly improves over \textsc{UCB} and \textsc{Weak-Fixed}. It also has lower mean regret than \textsc{MF-UCB} and \textsc{DNC}, although the paired confidence intervals overlap zero. \textsc{Low-Improving} is better at the final budget, which is expected when later weak-judge checkpoints are substantially more aligned with the verifier. We therefore interpret this experiment conservatively: it supports the practical relevance of improving weak-judge feedback, while the synthetic and residual-mismatch experiments provide cleaner evidence for the continuation mechanism.

\begin{table}[t]
\centering
\caption{Checkpoint-based bandit experiment at final budget $\Lambda=256000$. The reported value is TACC minus baseline; negative values favor TACC. Confidence intervals are paired $95\%$ intervals over $120$ common seeds.}
\label{tab:app-llm-empirical-checkpoint}
\small
\setlength{\tabcolsep}{4pt}
\begin{tabular}{lccc}
\toprule
Baseline & Mean diff. & $95\%$ CI & Favors TACC? \\
\midrule
DNC & $-737.5$ & $[-1606.2,\,131.1]$ & No \\
MF-UCB & $-427.7$ & $[-1125.4,\,270.1]$ & No \\
UCB & $-2356.3$ & $[-2831.0,\,-1881.5]$ & Yes \\
Weak-Fixed & $-2803.1$ & $[-3273.9,\,-2332.3]$ & Yes \\
Low-Improving & $1989.8$ & $[1521.2,\,2458.3]$ & No \\
\bottomrule
\end{tabular}
\end{table}

\begin{figure}[t]
    \centering
    \includegraphics[width=0.82\linewidth]{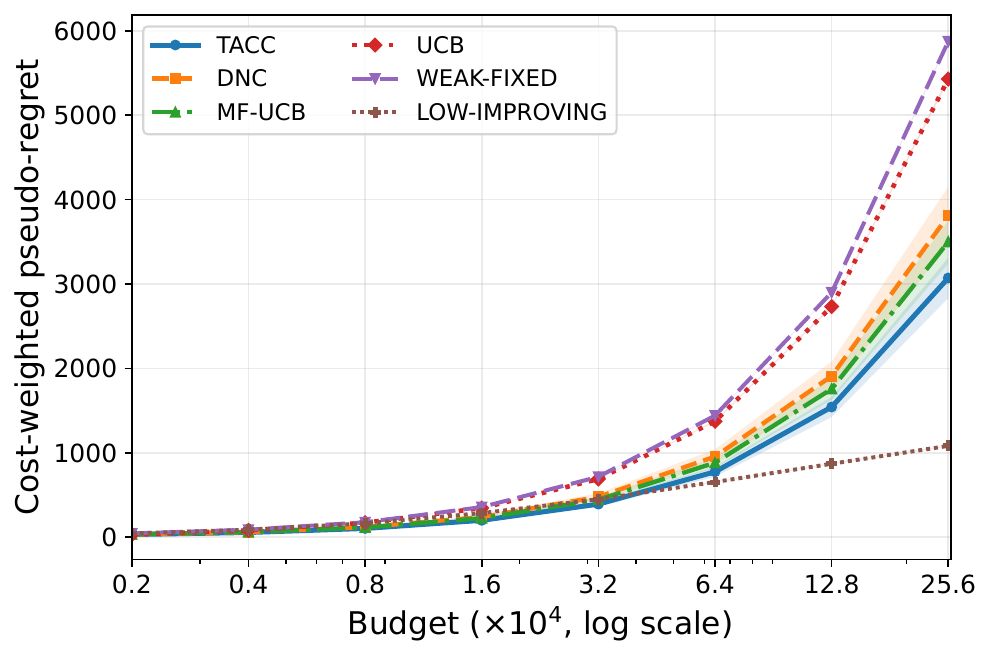}
    \caption{Checkpoint-based bandit experiment with $\lambda^{(H)}=500$ and $q_{\max}=2048$. The low-fidelity process is estimated from trained weak-judge checkpoints rather than imposed algebraically from the high-fidelity means.}
    \label{fig:app-llm-empirical-checkpoint}
\end{figure}

Figure~\ref{fig:app-llm-empirical-extra} separates the aggregate regret curve into normalized regret, high-fidelity calls, and continuation calls. These diagnostics help identify whether differences come from verifier usage, continuation behavior, or the cumulative scale of the budget.

\begin{figure*}[t]
    \centering
    \begin{subfigure}[t]{0.48\textwidth}
        \centering
        \includegraphics[width=\linewidth]{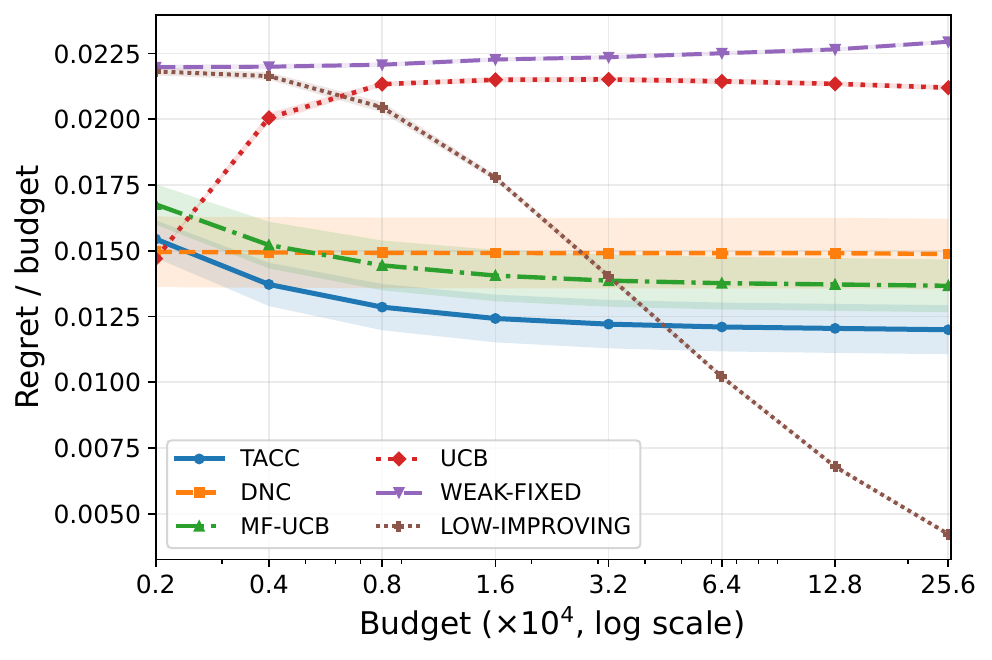}
        \caption{Normalized regret}
    \end{subfigure}
    \hfill
    \begin{subfigure}[t]{0.48\textwidth}
        \centering
        \includegraphics[width=\linewidth]{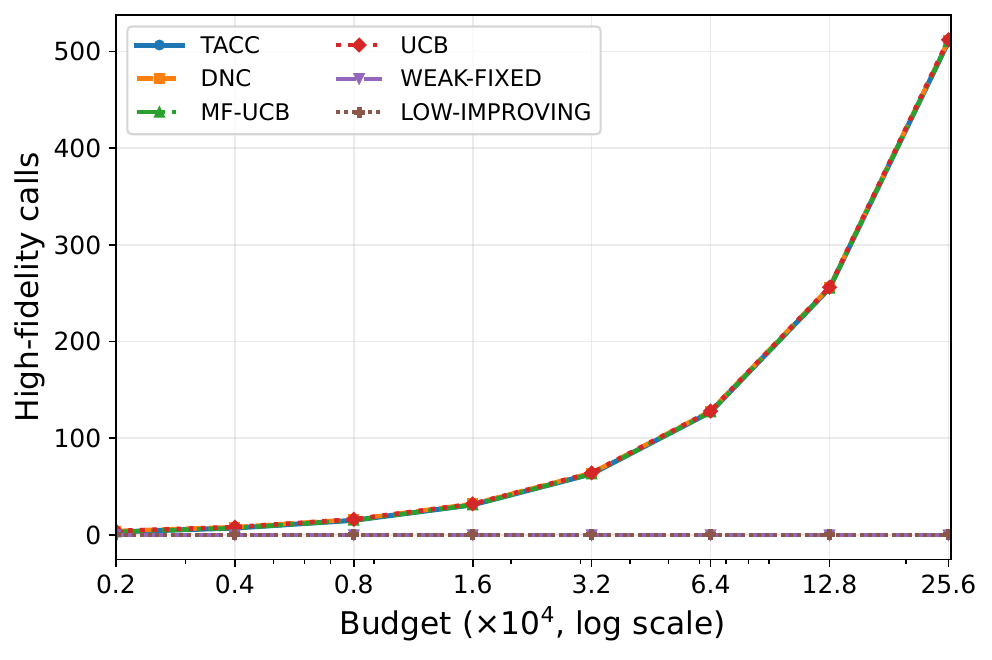}
        \caption{High-fidelity calls}
    \end{subfigure}

    \vspace{0.8em}

    \begin{subfigure}[t]{0.48\textwidth}
        \centering
        \includegraphics[width=\linewidth]{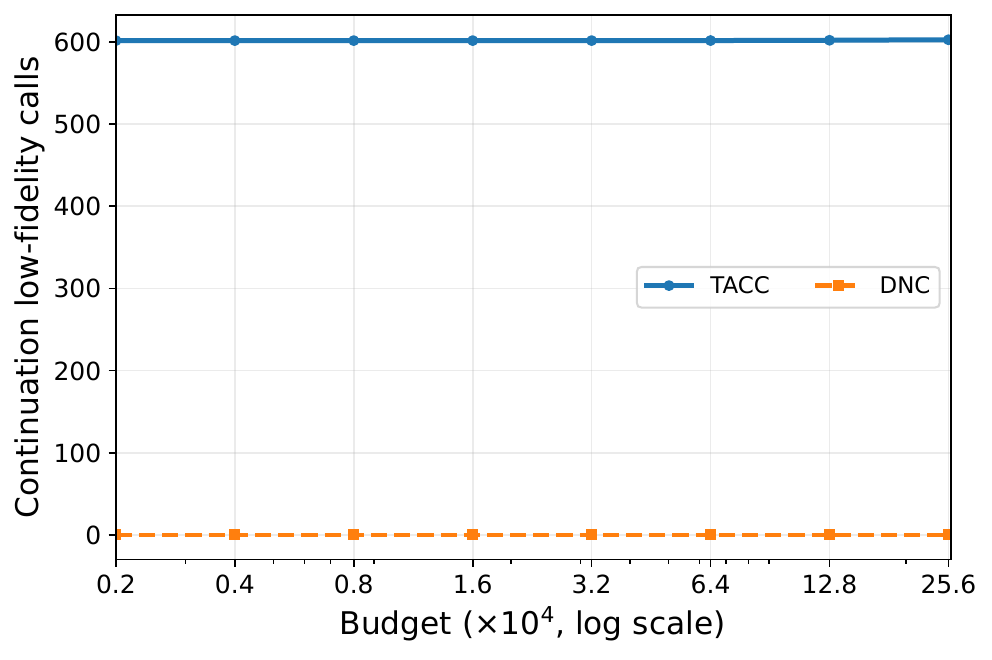}
        \caption{Continuation calls}
    \end{subfigure}
    \caption{Additional diagnostics for the checkpoint-based bandit experiment.}
    \label{fig:app-llm-empirical-extra}
\end{figure*}

The LLM-as-a-Judge experiments should not be read as visual estimates of an asymptotic regret exponent. The synthetic and vanishing-mismatch experiments test the shrinking-mismatch bound mechanism most directly. The residual-mismatch and checkpoint-based LLM experiments instead test finite-budget robustness under weak-judge mismatch, checkpoint caps, close top arms, and nonmonotone trained checkpoints.

\clearpage


\end{document}